\def\eqref#1{equation~\ref{#1}}
\def\1{\bm{1}}
\DeclareMathAlphabet{\mathsfit}{\encodingdefault}{\sfdefault}{m}{sl}
\SetMathAlphabet{\mathsfit}{bold}{\encodingdefault}{\sfdefault}{bx}{n}
\newtheorem{assumption}{Assumption}
\newtheorem{theorem}{Theorem}
\newtheorem{lemma}{Lemma}
\newtheorem{definition}{Definition}
\title{HyperDeepONet: learning operator with complex target function space using the limited resources via hypernetwork}
\author{Jae Yong Lee\thanks{These authors contributed equally.}\\
Center for Artificial Intelligence and Natural Sciences\\
Korea Institute for Advanced Study\\
Seoul, 02455, Republic of Korea \\
\texttt{\{jaeyong\}@kias.re.kr} \\
\And
Sung Woong Cho\footnotemark[1]\;\& Hyung Ju Hwang \thanks{corresponding author} \\
Department of Mathematics\\
Pohang University of Science and Technology \\
Pohang, 37673, Republic of Korea \\
\texttt{\{swcho95kr,hjhwang\}@postech.ac.kr}
}
\begin{document}

\maketitle

\begin{abstract}
Fast and accurate predictions for complex physical dynamics are a significant challenge across various applications. Real-time prediction on resource-constrained hardware is even more crucial in real-world problems. The deep operator network (DeepONet) has recently been proposed as a framework for learning nonlinear mappings between function spaces. However, the DeepONet requires many parameters and has a high computational cost when learning operators, particularly those with complex (discontinuous or non-smooth) target functions. This study proposes HyperDeepONet, which uses the expressive power of the hypernetwork to enable the learning of a complex operator with a smaller set of parameters. The DeepONet and its variant models can be thought of as a method of injecting the input function information into the target function. From this perspective, these models can be viewed as a particular case of HyperDeepONet. We analyze the complexity of DeepONet and conclude that HyperDeepONet needs relatively lower complexity to obtain the desired accuracy for operator learning. HyperDeepONet successfully learned various operators with fewer computational resources compared to other benchmarks.
\end{abstract}



\section{Introduction}
Operator learning for mapping between infinite-dimensional function spaces is a challenging problem. It has been used in many applications, such as climate prediction \citep{kurth2022fourcastnet} and fluid dynamics \citep{guo2016convolutional}. The computational efficiency of learning the mapping is still important in real-world problems. The target function of the operator can be discontinuous or sharp for complicated dynamic systems. In this case, balancing model complexity and cost for computational time is a core problem for the real-time prediction on resource-constrained hardware \citep{choudhary2020comprehensive,murshed2021machine}.

Many machine learning methods and deep learning-based architectures have been successfully developed to learn a nonlinear mapping from an infinite-dimensional Banach space to another. They focus on learning the solution operator of some partial differential equations (PDEs), e.g., the initial or boundary condition of PDE to the corresponding solution. \citet{anandkumar2019neural} proposed an iterative neural operator scheme to learn the solution operator of PDEs.

Simultaneously, \citet{lu2019deeponet,lu2021learning} proposed a deep operator network (DeepONet) architecture based on the universal operator approximation theorem of \citet{chen1995universal}. The DeepONet consists of two networks: branch net taking an input function at fixed finite locations, and trunk net taking a query location of the output function domain. Each network provides the $p$ outputs. The two $p$-outputs are combined as a linear combination (inner-product) to approximate the underlying operator, where the branch net produces the coefficients ($p$-coefficients) and the trunk net produces the basis functions ($p$-basis) of the target function.

While variant models of DeepONet have been developed to improve the vanilla DeepONet, they still have difficulty approximating the operator for a complicated target function with limited computational resources. \citet{lanthaler2022error} and \citet{kovachki2021neural} pointed out the limitation of linear approximation in DeepONet. Some operators have a slow spectral decay rate of the Kolmogorov $n$-width, which defines the error of the best possible linear approximation using an $n-$dimensional space. A large $n$ is required to learn such operators accurately, which implies that the DeepONet requires a large number of basis $p$ and network parameters for them.

\citet{hadorn2022shift} investigated the behavior of DeepONet, to find what makes it challenging to detect the sharp features in the target function when the number of basis $p$ is small. They proposed a Shift-DeepONet by adding two neural networks to shift and scale the input function. \citet{venturi2023svd} also analyzed the limitation of DeepONet via singular value decomposition (SVD) and proposed a flexible DeepONet (flexDeepONet), adding a pre-net and an additional output in the branch net. Recently, to overcome the limitation of the linear approximation, \citet{seidman2022nomad} proposed a nonlinear manifold decoder (NOMAD) framework by using a neural network that takes the output of the branch net as the input along with the query location. Even though these methods reduce the number of basis functions, the total number of parameters in the model cannot be decreased. The trunk net still requires many parameters to learn the complex operators, especially with the complicated (discontinuous or non-smooth) target functions.

In this study, we propose a new architecture, HyperDeepONet, which enables operator learning, and involves a complex target function space even with limited resources. The HyperDeepONet uses a hypernetwork, as proposed by \citet{ha2017hypernetworks}, which produces parameters for the target network. \citet{wang2022improved} pointed out that the final inner product in DeepONet may be inefficient if the information of the input function fails to propagate through a branch net. The hypernetwork in HyperDeepONet transmits the information of the input function to each target network's parameters. Furthermore, the expressivity of the hypernetwork reduces the neural network complexity by sharing the parameters \citep{galanti2020modularity}. Our main contributions are as follows.

\begin{itemize}
  \item We propose a novel HyperDeepONet using a hypernetwork to overcome the limitations of DeepONet and learn the operators with a complicated target function space. The DeepONet and its variant models are analyzed primarily in terms of expressing the target function as a neural network (Figure \ref{fig:deeponet_variants}). These models can be simplified versions of our general HyperDeepONet model (Figure \ref{fig:hyperdeeponet}).
  
  \item We analyze the complexity of DeepONet (Theorem \ref{theorem_main}) and prove that the complexity of the HyperDeepONet is lower than that of the DeepONet. We have identified that the DeepONet should employ a large number of basis to obtain the desired accuracy, so it requires numerous parameters. For variants of DeepONet combined with nonlinear reconstructors, we also present a lower bound for the number of parameters in the target network.
  
  \item The experiments show that the HyperDeepONet facilitates learning an operator with a small number of parameters in the target network even when the target function space is complicated with discontinuity and sharpness, which the DeepONet and its variants suffer from. The HyperDeepONet learns the operator more accurately even when the total number of parameters in the overall model is the same.
\end{itemize}

\section{Related work}
Many machine learning methods and deep learning-based architectures have been successfully developed to solve PDEs with several advantages. One research direction is to use the neural network directly to represent the solution of PDE \citep{MR3767958,MR3874585}. The physics-informed neural network (PINN), introduced by \citet{MR3881695}, minimized the residual of PDEs by using automatic differentiation instead of numerical approximations. 

There is another approach to solve PDEs, called operator learning. Operator learning aims to learn a nonlinear mapping from an infinite-dimensional Banach space to another. Many studies utilize the convolutional neural network to parameterize the solution operator of PDEs in various applications \citep{guo2016convolutional,MR3977168,MR4253972,MR3957452,hwang2021solving}. The neural operator \citep{kovachki2021neural} is proposed to approximate the nonlinear operator inspired by Green's function. \citet{li2021fourier} extend the neural operator structure to the Fourier Neural Operator (FNO) to approximate the integral operator effectively using the fast Fourier transform (FFT). \citet{kovachki2021universal} proved the universality of FNO and identified the size of the network.

The DeepONet \citep{lu2019deeponet,lu2021learning} has also been proposed as another framework for operator learning. The DeepONet has significantly been applied to various problems, such as bubble growth dynamics \citep{lin2021operator}, hypersonic flows \citep{MR4316010}, and fluid flow \citep{cai2021deepm}. \begin{wrapfigure}{r}{6.25cm}
\vspace{-5mm}
\centering\includegraphics[width=0.38\textwidth]{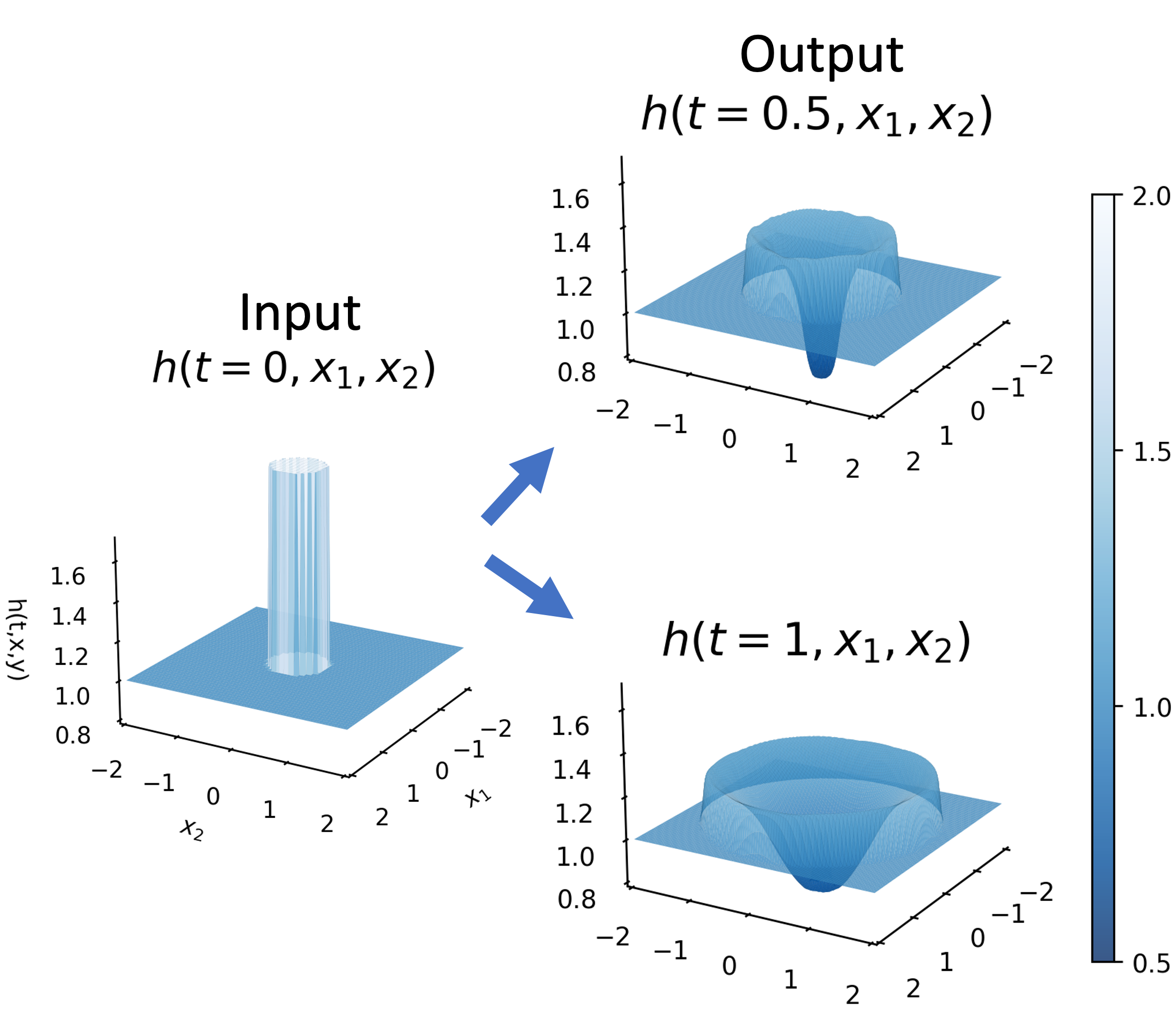}
\caption{Example of operator learning: the input function and the output function for the solution operator of shallow water equation}
\label{fig:sample_shallow}
\vspace{2mm}
\end{wrapfigure}
\citet{lanthaler2022error} provided the universal approximation property of DeepONet. \citet{wang2021learning} proposed physics-informed DeepONet by adding a residual of PDE as a loss function, and \citet{ryck2022generic} demonstrated the generic bounds on the approximation error for it. \citet{prasthofer2022variable} considered the case where the discretization grid of the input function in DeepONet changes by employing the coordinate encoder. \citet{lu2022comprehensive} compared the FNO with DeepONet in different benchmarks to demonstrate the relative performance. FNO can only infer the output function of an operator as the input function in the same grid as it needs to discretize the output function to use Fast Fourier Transform(FFT). In contrast, the DeepONet can predict from any location.

\citet{ha2017hypernetworks} first proposed hypernetwork, a network that creates a weight of the primary network. Because the hypernetwork can achieve weight sharing and model compression, it requires a relatively small number of parameters even as the dataset grows.  \citet{galanti2020modularity} proved that a hypernetwork provides higher expressivity with low-complexity target networks. \citet{sitzmann2020implicit} and \citet{klocek2019hypernetwork} employed this approach to restoring images with insufficient pixel observations or resolutions. \citet{de2021hyperpinn} investigated the relationship between the coefficients of PDEs and the corresponding solutions. They combined the hypernetwork with the PINN's residual loss. For time-dependent PDE, \citet{pan2022neural} designated the time $t$ as the input of the hypernetwork so that the target network indicates the solution at $t$. \citet{Oswald2020Continual} devised a chunk embedding method that partitions the parameters of the target network; this is because the output dimension of the hypernetwork can be large.

\section{Operator learning}
\subsection{Problem setting} \label{section3.1:problem setting}
The goal of operator learning is to learn a mapping from infinite-dimensional function space to the others using a finite pair of functions. Let $\mathcal{G}:\mathcal{U}\to\mathcal{S}$ be a nonlinear operator, where $\mathcal{U}$ and $\mathcal{S}$ are compact subsets of infinite-dimensional function spaces $\mathcal{U}\subset C(\mathcal{X};\mathbb{R}^{d_u})$ and $\mathcal{S}\subset C(\mathcal{Y};\mathbb{R}^{d_s})$ with compact domains $\mathcal{X}\subset\mathbb{R}^{d_x}$ and $\mathcal{Y}\subset\mathbb{R}^{d_y}$. For simplicity, we focus on the case $d_u=d_s=1$, and all the results could be extended to a more general case for arbitrary $d_u$ and $d_s$. Suppose we have observations $\{u_i,\mathcal{G}(u_i)\}_{i=1}^N$ where $u_i\in\mathcal{U}$ and $\mathcal{G}(u_i)\in\mathcal{S}$. We aim to find an approximation $\mathcal{G}_\theta:\mathcal{U}\to\mathcal{S}$ with parameter $\theta$ using the $N$ observations so that $\mathcal{G}_\theta\approx\mathcal{G}$. For example, in a dam break scenario, it is an important to predict the fluid flow over time according to a random initial height of the fluid.\begin{wrapfigure}{r}{3.6cm}
\vspace{-5mm}
\hspace{-2mm}%
\centering\includegraphics[width=0.23\textwidth]{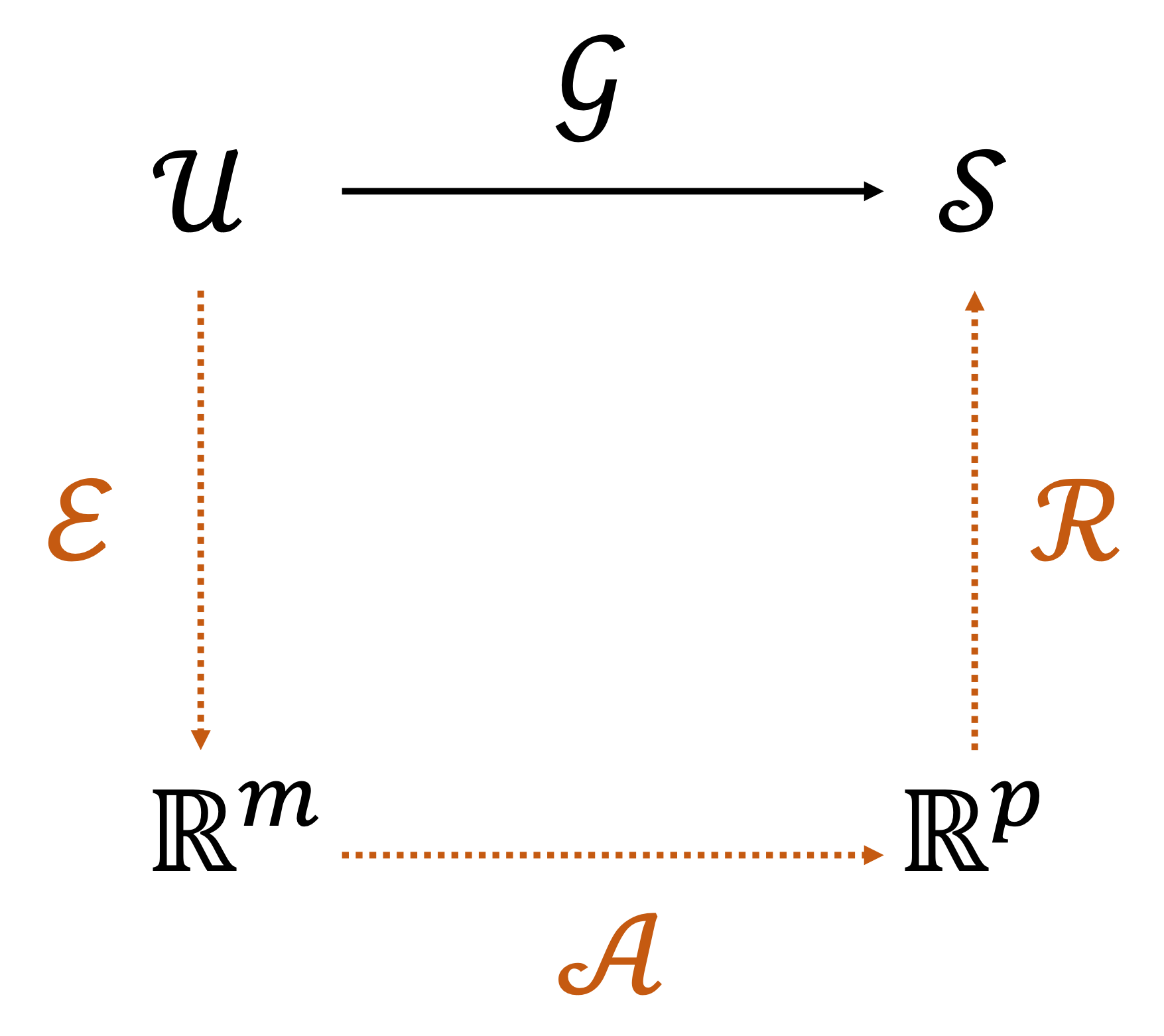}
\caption{Diagram for the three components for operator learning.}
\label{fig_diagram}
\vspace{-4mm}
\end{wrapfigure}
To this end, we want to find an operator $\mathcal{G}_\theta$, which takes an initial fluid height as an input function and produces the fluid height over time at any location as the output function (Figure \ref{fig:sample_shallow}).

As explained in \citet{lanthaler2022error}, the approximator $\mathcal{G}_\theta$ can be decomposed into the three components (Figure \ref{fig_diagram}) as
\begin{equation}
  \mathcal{G}_\theta:=\mathcal{R}\circ\mathcal{A}\circ\mathcal{E}.
\end{equation} First, the encoder $\mathcal{E}$ takes an input function $u$ from $\mathcal{U}$ to generate the finite-dimensional encoded data in $\mathbb{R}^m$. Then, the approximator $\mathcal{A}$ is an operator approximator from the encoded data in finite dimension space $\mathbb{R}^m$ to the other finite-dimensional space $\mathbb{R}^p$. Finally, the reconstructor $\mathcal{R}$ reconstructs the output function $s(y)=\mathcal{G}(u)(y)$ with $y\in\mathcal{Y}$ using the approximated data in $\mathbb{R}^p$.

\subsection{DeepONet and its limitation}
DeepONet can be analyzed using the above three decompositions. Assume that all the input functions $u$ are evaluated at fixed locations $\{x_j\}_{j=1}^m\subset\mathcal{X}$; they are called "sensor points." DeepONet uses an encoder as the pointwise projection $\mathcal{E}(u)=(u(x_1), u(x_2), ..., u(x_m))$ of the continuous function $u$, the so-called "sensor values" of the input function $u$. An intuitive idea is to employ a neural network that simply concatenates these $m$ sensor values and a query point $y$ as an input to approximate the target function $\mathcal{G}(u)(y)$. DeepONet, in contrast, handles $m$ sensor values and a query point $y$ separately into two subnetworks based on the universal approximation theorem for the operator \citep{lu2021learning}. See Appendix \ref{appendix:deeponet} for more details. \citet{lu2021learning} use the fully connected neural network for the approximator $\mathcal{A}:\mathbb{R}^m\to\mathbb{R}^p$. They referred to the composition of these two maps as branch net
\begin{equation} \label{equation_branch net}
  \beta:\mathcal{U}\to\mathbb{R}^p, \beta(u):=\mathcal{A}\circ\mathcal{E}(u)
\end{equation}
for any $u\in\mathcal{U}$. The role of branch net can be interpreted as learning the coefficient of the target function $\mathcal{G}(u)(y)$. They use one additional neural network, called trunk net $\tau$ as shown below.
\begin{equation} \label{equation_trunk net}
  \tau:\mathcal{Y}\to\mathbb{R}^{p+1}, \tau(y):=\{\tau_k(y)\}_{k=0}^p
\end{equation}
for any $y\in\mathcal{Y}$. The role of trunk net can be interpreted as learning an affine space $V$ that can efficiently approximate output function space $C(\mathcal{Y};\mathbb{R}^{d_s})$. The functions $\tau_1(y)$, ...,$\tau_p(y)$ become the $p$-basis of vector space associated with $V$ and $\tau_0(y)$ becomes a point of $V$. By using the trunk net $\tau$, the $\tau$-induced reconstructor $\mathcal{R}$ is defined as
\begin{equation} \label{equation_reconstruction}
  \mathcal{R}_\tau:\mathbb{R}^p\to C(\mathcal{Y};\mathbb{R}^{d_s}), \mathcal{R}_\tau(\beta)(y):=\tau_0(y)+\sum_{k=1}^p\beta_k\tau_k(y)
\end{equation}
where $\beta=(\beta_1, \beta_2, ..., \beta_p)\in\mathbb{R}^p$.
In DeepONet, $\tau_0(y)$ is restricted to be a constant $\tau_0\in\mathbb{R}$ that is contained in a reconstructor $\mathcal{R}$. The architecture of DeepONet is described in Figure \ref{fig:deeponet_variants} (b).

Here, the $\tau$-induced reconstructor $\mathcal{R}_\tau$ is the linear approximation of the output function space. Because the linear approximation $\mathcal{R}$ cannot consider the elements in its orthogonal complement, a priori limitation on the best error of DeepONet is explained in \citet{lanthaler2022error} as
\begin{equation}
   \left(\int_\mathcal{U}\int_\mathcal{Y}|\mathcal{G}(u)(y)-\mathcal{R}_\tau\circ\mathcal{A}\circ\mathcal{E}(u)(y)|^2dyd\mu(u)\right)^{\frac{1}{2}}\geq\sqrt{\sum_{k>p}\lambda_k},
 \end{equation} 
 where $\lambda_1\geq\lambda_2\geq...$ are the eigenvalues of the covariance operator $\Gamma_{\mathcal{G}_{\#\mu}}$ of the push-forward measure $\mathcal{G}_{\#\mu}$. Several studies point out that the slow decay rate of the lower bound leads to inaccurate approximation operator learning using DeepONet \citep{kovachki2021neural,hadorn2022shift,lanthaler2022error}. For example, the solution operator of the advection PDEs \citep{seidman2022nomad,venturi2023svd} and of the Burgers' equation \citep{hadorn2022shift} are difficult to approximate when we are using the DeepONet with the small number of basis $p$.
\begin{wrapfigure}{r}{5.65cm}
\vspace{1mm}
\hspace{-2mm}%
\centering\includegraphics[width=0.4\textwidth]{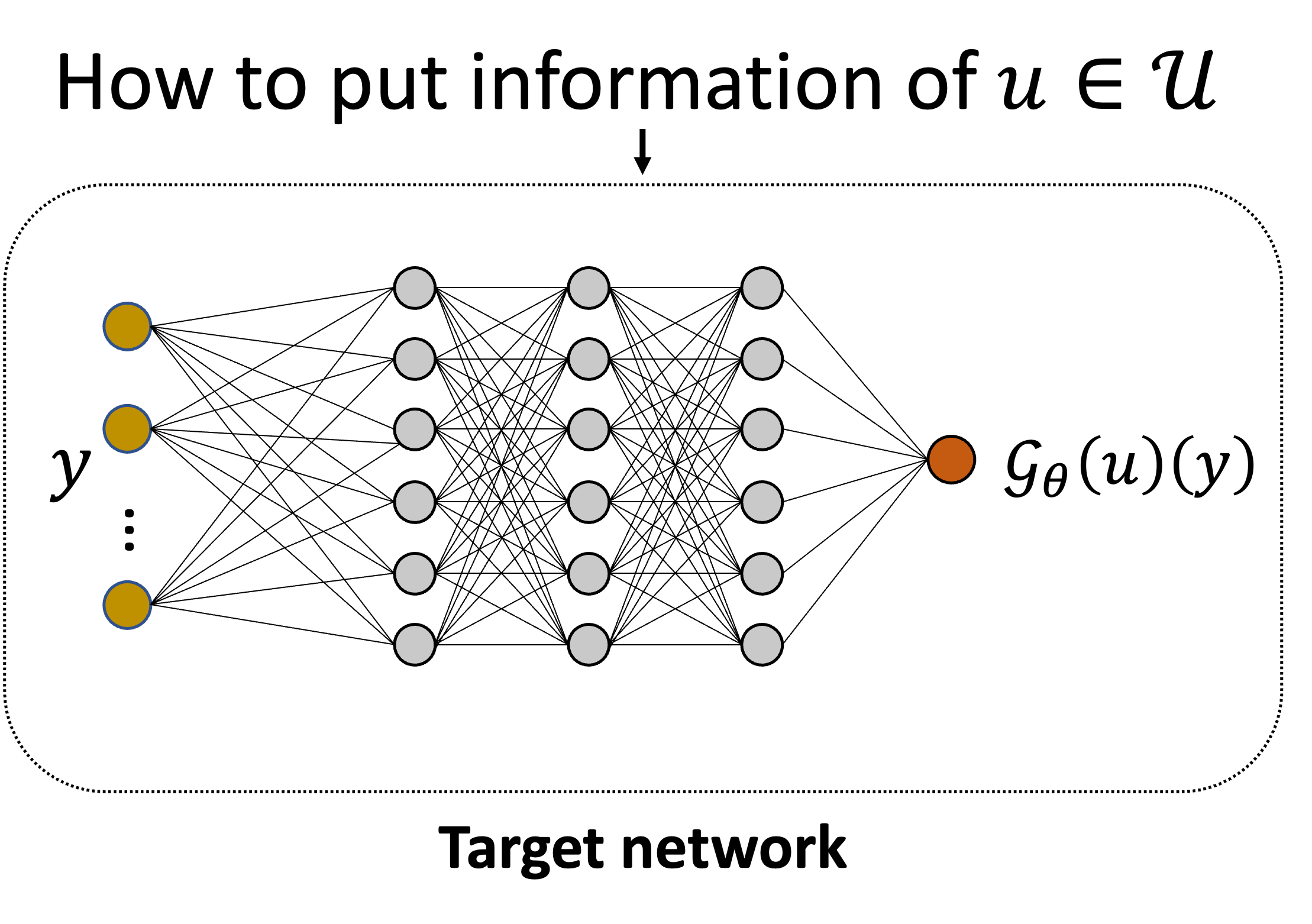}
\caption{The perspective target network parametrization for operator learning.}
\label{fig:perspective}
\vspace{-12mm}
\end{wrapfigure}

\subsection{Variant models of DeepONet}
Several variants of DeepONet have been developed to overcome its limitation. All these models can be viewed from the perspective of parametrizing the target function as a neural network. When we think of the target network that receives $y$ as an input and generates an output $\mathcal{G}_\theta(u)(y)$, the DeepONet and its variant model can be distinguished by how information from the input function $u$ is injected into this target network $\mathcal{G}_\theta(u)$, as described in Figure \ref{fig:perspective}. From this perspective, the trunk net in the DeepONet can be considered as the target network except for the final output, as shown in Figure \ref{fig:deeponet_variants} (a). The output of the branch net gives the weight between the last hidden layer and the final output.

\citet{hadorn2022shift} proposed Shift-DeepONet. The main idea is that a scale net and a shift net are used to shift and scale the input query position $y$. Therefore, it can be considered that the information of input function $u$ generates the weights and bias between the input layer and the first hidden layer, as explained in Figure \ref{fig:deeponet_variants} (b).

\citet{venturi2023svd} proposed FlexDeepONet, explained in Figure \ref{fig:deeponet_variants} (c). They used the additional network, pre-net, to give the bias between the input layer and the first hidden layer. Additionally, the output of the branch net also admits the additional output $\tau_0$ to provide more information on input function $u$ at the last inner product layer.

NOMAD is recently developed by \citet{seidman2022nomad} to overcome the limitation of DeepONet. They devise a nonlinear output manifold using a neural network that takes the output of branch net $\{\beta_i\}_{i=1}^p$ and the query location $y$. As explained in Figure \ref{fig:deeponet_variants} (d), the target network receives information about the function $u$ as an additional input, similar to other conventional neural embedding methods \citep{park2019deepsdf,chen2019learning,mescheder2019occupancy}.

\begin{figure}[]
\begin{center}
\includegraphics[width=\textwidth]{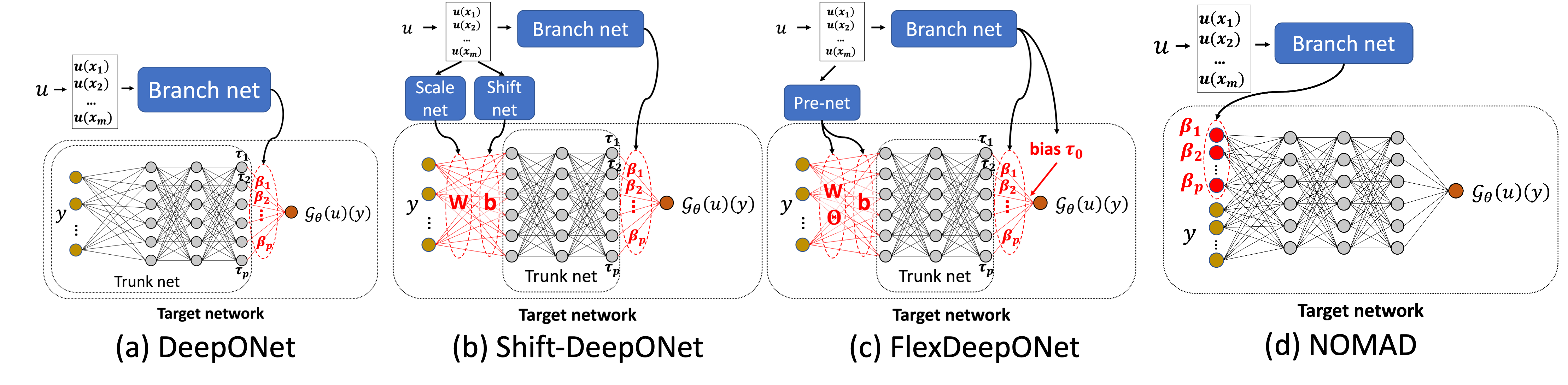}
\end{center}
\caption{DeepONet and its variant models for operator learning.}
\label{fig:deeponet_variants}
\end{figure}

These methods provide information on the input function $u$ to only a part of the target network. It is a natural idea to use a hypernetwork to share the information of input function $u$ to all parameters of the target network. We propose a general model HyperDeepONet (Figure \ref{fig:hyperdeeponet}), which contains the vanilla DeepONet, FlexDeepONet, and Shift-DeepONet, as a special case of the HyperDeepONet.

\begin{wrapfigure}{r}{5.7cm}
\vspace{-8mm}
\hspace{-2.5mm}%
\centering\includegraphics[width=0.42\textwidth]{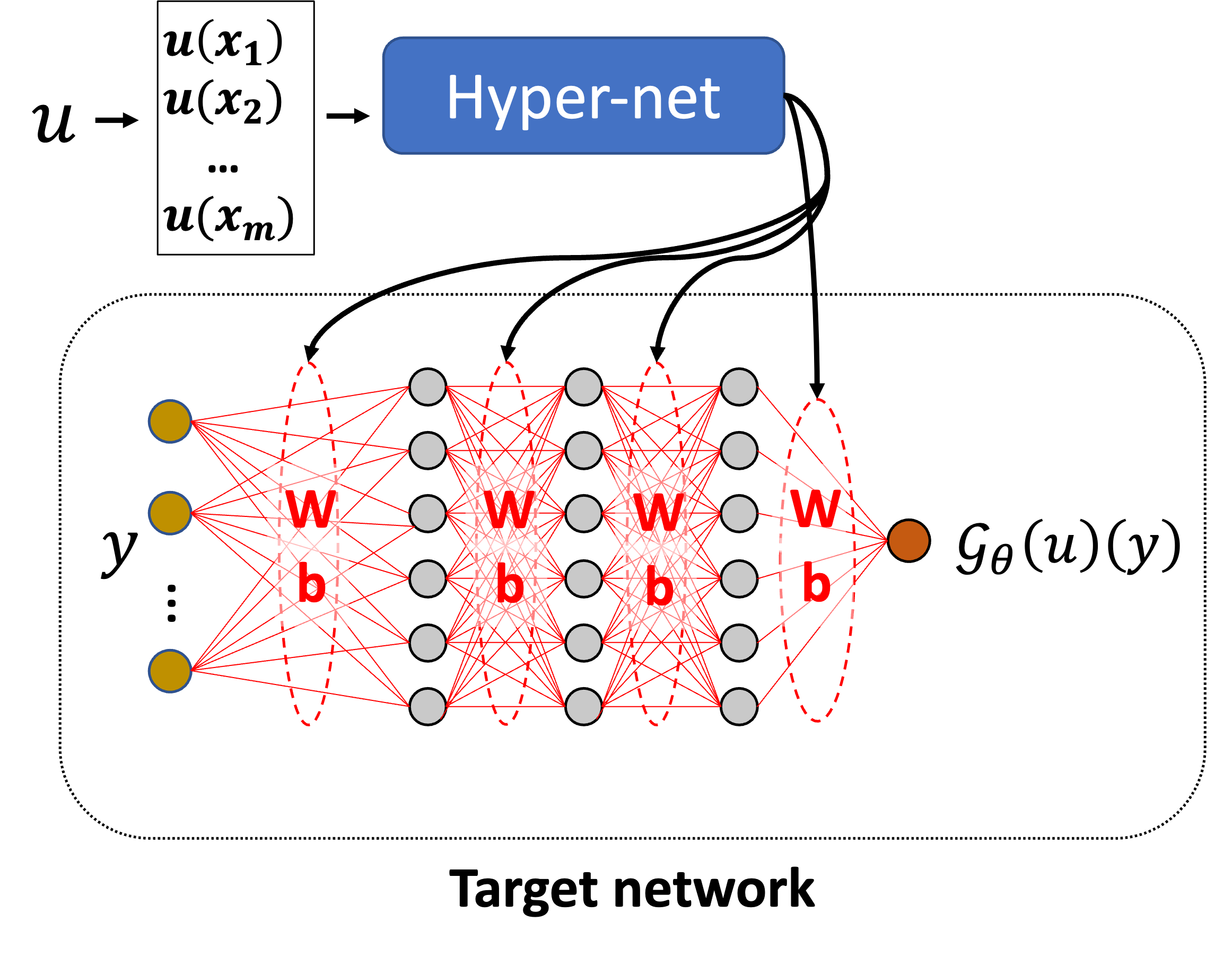}
\caption{The proposed HyperDeepONet structure}
\label{fig:hyperdeeponet}
\vspace{2mm}
\end{wrapfigure}

\section{Proposed model: HyperDeepONet}
\subsection{Architecture of HyperDeepONet}
The HyperDeepONet structure is described in Figure \ref{fig:hyperdeeponet}. The encoder $\mathcal{E}$ and the approximator $\mathcal{A}$ are used, similar to the vanilla DeepONet. The proposed structure replaces the branch net with the hypernetwork. The hypernetwork generate all parameters of the target network. More precisely, we define the hypernetwork $h$ as 
\begin{equation}
  h_\theta:\mathcal{U}\to\mathbb{R}^p,\;h_\theta(u):=\mathcal{A}\circ\mathcal{E}(u) 
\end{equation}
for any $u\in\mathcal{U}$. Then, $h(u)=\Theta\in\mathbb{R}^p$ is a network parameter of the target network, which is used in reconstructor for the HyperDeepONet. We define the reconstructor $\mathcal{R}$ as
\begin{equation}
  \mathcal{R}:\mathbb{R}^p\to C(\mathcal{Y};\mathbb{R}^{d_s}),\; \mathcal{R}(\Theta)(y):=\text{NN}(y;\Theta)
\end{equation}
where $\Theta=[W,b]\in\mathbb{R}^p$, and NN denotes the target network. Two fully connected neural networks are employed for the hypernetwork and target network.

Therefore, the main idea is to use the hypernetwork, which takes an input function $u$ and produces the weights of the target network. It can be thought of as a weight generator for the target network. The hypernetwork determines the all parameters of the target network containing the weights between the final hidden layer and the output layer. It implies that the structure of HyperDeepONet contains the entire structure of DeepONet. As shown in Figure \ref{fig:deeponet_variants} (b) and (c), Shift-DeepONet and FlexDeepONet can also be viewed as special cases of the HyperDeepONet, where the output of the hypernetwork determines the weights or biases of some layers of the target network. The outputs of the hypernetwork determine the biases for the first hidden layer in the target network for NOMAD in Figure \ref{fig:deeponet_variants} (d).



\subsection{Comparison on complexity of DeepONet and HyperDeepONet}
In this section, we would like to clarify the complexity of the DeepONet required for the approximation $\mathcal{A}$ and reconstruction $\mathcal{R}$ based on the theory in \citet{galanti2020modularity}. Furthermore, we will show that the HyperDeepONet entails a relatively lower complexity than the DeepONet using the results on the upper bound for the complexity of hypernetwork \citep{galanti2020modularity}.

\subsubsection{Notations and definitions} \label{section4.2.1: Notations and Definitions}
Suppose that the pointwise projection values (sensor values) of the input function $u$ is given as $\mathcal{E}(u)=(u(x_1), u(x_2), ..., u(x_m))$. For simplicity, we consider the case $\mathcal{Y}=[-1,1]^{d_y}$ and $\mathcal{E}(u)\in[-1,1]^{m}$. For the composition $\mathcal{R}\circ\mathcal{A}:\mathbb{R}^m \rightarrow C(\mathcal{Y} ;\mathbb{R})$, we focus on approximating the mapping $\mathcal{O}:\mathbb{R}^{m+d_y}\rightarrow\mathbb{R}$, which is defined as follows:
\begin{equation} \label{eq_O}
    \mathcal{O}(\mathcal{E}(u), y):= (\mathcal{R} \circ \mathcal{A}(\mathcal{E}(u)))(y), \quad \text{for $y\in [-1,1]^{d_y}, \mathcal{E}(u)\in [-1,1]^{m}$}.
\end{equation} The supremum norm $\|h\|_{\infty}$ is defined as $\max_{y\in \mathcal{Y}} \|h(y)\|$. Now, we introduce the Sobolev space $\mathcal{W}_{r, n}$, which is a subset of $C^r ([-1, 1]^{n} ; \mathbb{R})$. For $r, n \in \mathbb{N}$,
\begin{equation}
    \mathcal{W}_{r, n}:=\bigg\{h:[-1, 1]^{n} \rightarrow \mathbb{R} \quad\Big| \|h\|_r^s:=\| h \|_{\infty}+\sum_{1\le |\mathbf{k}|\le r}\| D^{\mathbf{k}} h\|_{\infty} \le 1\bigg\},\nonumber
\end{equation}
where $D^\mathbf{k}h$ denotes the partial derivative of $h$ with respect to multi-index $\mathbf{k}\in\left\{\mathbb{N}\cup \left\{0\right\}\right\}^{d_y}$. We assume that the mapping $\mathcal{O}$ lies in the Sobolev space $\mathcal{W}_{r, m+d_y}$.

For the nonlinear activation $\sigma$, the class of neural network $\mathcal{F}$ represents the fully connected neural network with depth $k$ and corresponding width $(h_1=n, h_2, \cdots, h_{k+1})$,  where $W^i \in \mathbb{R}^{h_i} \times \mathbb{R}^{h_{i+1}}$ and $b_i\in \mathbb{R}^{h_{i+1}}$ denote the weights and bias of the $i$-th layer respectively.
\begin{align}
    \mathcal{F}:=\left\{f: [-1, 1]^n \rightarrow \mathbb{R}| f(y;[\mathbf{W}, \mathbf{b}])=W^{k} \cdot \sigma (W^{k-1}\cdots \sigma( W^1\cdot y +b^1)+b^{k-1})+b^k  \right\}\nonumber
\end{align}

Some activation functions facilitate an approximation for the Sobolev space and curtail the complexity. We will refer to these functions as universal activation functions. The formal definition can be found below, where the distance between the class of neural network $\mathcal{F}$ and the Sobolev space $\mathcal{W}_{r, n}$ is defined by $d(\mathcal{F}; \mathcal{W}_{r, n}):=\sup_{\psi\in \mathcal{W}_{r, n}}\inf_{f\in \mathcal{F}}\|f - \psi\|_{\infty}$. Most well-known activation functions are universal activations that are infinitely differentiable and non-polynomial in any interval \citep{mhaskar1996neural}. Furthermore, \citet{hanin2017approximating} state that the ReLU activation is also universal. 

\begin{definition}\label{universal activation}\citep{galanti2020modularity} (Universal activation). The activation function $\sigma$ is called universal if there exists a class of neural network $\mathcal{F}$ with activation function $\sigma$ such that the number of parameters of $\mathcal{F}$ is $O(\epsilon^{-n/r})$ with $d(\mathcal{F}; \mathcal{W}_{r, n})\le \epsilon$ for all $r, n\in \mathbb{N}$.    
\end{definition} 

We now introduce the theorem, which offers a guideline on the neural network architecture for operator learning. It suggests that if the entire architecture can be replaced with a fully connected neural network, large complexity should be required for approximating the target function. It also verifies that the lower bound for a universal activation function is a sharp bound on the number of parameters. First, we give an assumption to obtain the theorem.

\begin{assumption} \label{assumption_better approximation} Suppose that $\mathcal{F}$ and $\mathcal{W}_{r, n}$ represent the class of neural network and the target function space to approximate,  respectively. Let $\mathcal{F}'$ be a neural network class representing a structure in which one neuron is added rather than $\mathcal{F}$. Then, the followings holds for all $\psi\in \mathcal{W}_{r, n}$ not contained in $\mathcal{F}$.
\begin{equation}
    \inf_{f\in \mathcal{F}}\|f-\psi\|_\infty>\inf_{f\in \mathcal{F}'}\|f-\psi\|_\infty. \nonumber
\end{equation}
\end{assumption}

For $r=0$, \citet{galanti2020modularity} remark that the assumption is valid for 2-layered neural networks with respect to the $L^2$ norm when an activation function $\sigma$ is either a hyperbolic tangent or sigmoid function. With Assumption \ref{assumption_better approximation}, the following theorem holds, which is a fundamental approach to identifying the complexity of DeepONet and its variants. Note that a real-valued function $g\in L^1(\mathbb{R})$ is called a bounded variation if its total variation $\sup _{\phi \in C_c^1 (\mathbb{R}), \|\phi \|_\infty \le 1} \int_{\mathbb{R}} g(x) \phi ' (x) dx$ is finite.

\begin{theorem}\label{theorem_lower bound params_num} \citep{galanti2020modularity}. Suppose that $\mathcal{F}$ is a class of neural networks with a piecewise $C^1(\mathbb{R})$ activation function $\sigma:\mathbb{R}\rightarrow\mathbb{R}$ of which derivative $\sigma'$ is bounded variation. If any non-constant $\psi\in \mathcal{W}_{r, n}$ does not belong to $\mathcal{F}$, then $d(\mathcal{F}; W_{r, n})\le \epsilon$ implies the number of parameters in $\mathcal{F}$ should be $\Omega(\epsilon ^{-n/r})$.   
\end{theorem}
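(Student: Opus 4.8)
\emph{Proof sketch.} This is an inverse (lower--bound) approximation statement, so I would not construct anything; instead I would show that if $\mathcal{F}$ had too few parameters it could not be $\epsilon$--dense in $\mathcal{W}_{r,n}$. The argument rests on two metric--entropy estimates: a \emph{lower} bound on the $\epsilon$--covering number of the Sobolev ball, and an \emph{upper} bound on the $\epsilon$--covering number of a parametric network family in terms of its number of parameters. The hypothesis on $\sigma$ (piecewise $C^1$ with $\sigma'$ of bounded variation) is exactly what makes the second estimate go through, and Assumption \ref{assumption_better approximation} is what lets me upgrade the resulting bound from $\widetilde\Omega(\epsilon^{-n/r})$ to the clean $\Omega(\epsilon^{-n/r})$.

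For the Sobolev side I would use the classical Kolmogorov--Tikhomirov construction: tile $[-1,1]^n$ by $\asymp \epsilon^{-n/r}$ cubes of side $\asymp \epsilon^{1/r}$, put a fixed $C^\infty$ bump on each cube, and rescale so that every signed combination $\sum_i \eta_i\phi_i$ with $\eta_i\in\{-1,+1\}$ lies in $\mathcal{W}_{r,n}$ while any two such combinations differ by $\gtrsim\epsilon$ in $\|\cdot\|_\infty$ at the center of some cube. This yields $2^{c\,\epsilon^{-n/r}}$ elements of $\mathcal{W}_{r,n}$ that are pairwise $\gtrsim\epsilon$--separated, so the $\epsilon$--covering number of $\mathcal{W}_{r,n}$ in the supremum norm is at least $2^{c\,\epsilon^{-n/r}}$. (The hypothesis that no non--constant element of $\mathcal{W}_{r,n}$ lies in $\mathcal{F}$ rules out the trivial escape of representing these bump combinations exactly.)

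For the network side I would first argue that the parameters may be restricted to a fixed box $B_R\subset\mathbb{R}^{N}$ with $R$ independent of $\epsilon$: since every target has sup norm $\le 1$, weights whose magnitude pushes the output far outside a bounded range are useless, and a rescaling/pruning argument using continuity of $\sigma$ removes them. On $B_R$ the map $\theta\mapsto f(\cdot\,;\theta)$ is Lipschitz into $\bigl(C([-1,1]^n),\|\cdot\|_\infty\bigr)$ with a constant $L$ depending only on $R$, the depth and the widths --- here is where ``piecewise $C^1$'' (so $\sigma$ is locally Lipschitz) and ``$\sigma'\in\mathrm{BV}$'' (so $\sigma'$ is locally bounded, giving uniform control of the derivative of the composition in the weights) are used. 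Hence $\mathcal{F}|_{B_R}$ has $\epsilon$--covering number at most $(CLR/\epsilon)^{N}$, and if $d(\mathcal{F};\mathcal{W}_{r,n})\le\epsilon$ an $\epsilon$--net of $\mathcal{F}$ is a $2\epsilon$--net of $\mathcal{W}_{r,n}$, so $(CLR/\epsilon)^{N}\ge 2^{c'\epsilon^{-n/r}}$, i.e. $N\gtrsim \epsilon^{-n/r}/\log(1/\epsilon)$. To delete the logarithm I would invoke continuous nonlinear $N$--width theory (DeVore--Howard--Micchelli): the network map $M:\theta\mapsto f(\cdot\,;\theta)$ on $B_R$ is continuous, and Assumption \ref{assumption_better approximation} supplies enough rigidity to choose, continuously in $\psi$, a parameter $\theta(\psi)$ with $\|f(\cdot\,;\theta(\psi))-\psi\|_\infty\le\epsilon$; the pair $(\theta(\cdot),M)$ then realizes continuous nonlinear width $\le\epsilon$ for $\mathcal{W}_{r,n}$ with $N$ parameters, and the width lower bound $c\,N^{-r/n}$ forces $N=\Omega(\epsilon^{-n/r})$.

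\textbf{Main obstacle.} The soft steps are the two entropy estimates; the delicate part is passing from the crude covering bound to the sharp rate. Concretely, one must (a) justify that a bounded, $\epsilon$--independent parameter box suffices and that the parametrization is uniformly Lipschitz there --- this genuinely needs $\sigma'\in\mathrm{BV}$ --- and (b) extract a \emph{continuous} near--optimal selection $\psi\mapsto\theta(\psi)$, since the exact best--approximation operator for neural networks is typically discontinuous. Step (b) is the only place Assumption \ref{assumption_better approximation} is essential, and it is the reason the theorem is stated together with that assumption.
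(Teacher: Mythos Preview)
The paper does not prove this theorem; it is quoted from \citet{galanti2020modularity} and used as a black box in the proof of Theorem~\ref{theorem_main}. So there is no in-paper argument to compare your sketch against, and your proposal should be judged on its own merits (or against the original Galanti--Wolf proof).

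Your high-level plan---Kolmogorov--Tikhomirov packing of $\mathcal{W}_{r,n}$ on one side, parametric Lipschitz covering of the network class on the other, then a sharpening via the DeVore--Howard--Micchelli continuous nonlinear $N$-width---is the standard route to results of this type, and the first two steps are routine once the parameter box and the Lipschitz constant are in hand. Two places deserve more care. First, your reduction to a bounded parameter box $B_R$ with $R$ independent of $\epsilon$ is asserted rather than argued: large weights can cancel, so ``useless for outputs in $[-1,1]$'' is not a proof; one typically needs a reparametrization/normalization step (e.g.\ absorbing scalings layer by layer) to make this precise, and this is where the BV hypothesis on $\sigma'$ actually bites.

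Second, and more seriously, your step (b) does not follow from Assumption~\ref{assumption_better approximation} as stated. That assumption says only that enlarging the architecture by one neuron \emph{strictly decreases the value} $\inf_{f\in\mathcal{F}}\|f-\psi\|_\infty$ for every $\psi\notin\mathcal{F}$; it is a monotonicity statement about the infimum, not about the location of minimizers, and it gives no control whatsoever on the continuity of a near-best selection $\psi\mapsto\theta(\psi)$. Best-approximation maps for neural networks are generically set-valued and discontinuous, and ``strict improvement under enlargement'' does not repair that. As written, the sentence ``Assumption~\ref{assumption_better approximation} supplies enough rigidity to choose, continuously in $\psi$, a parameter $\theta(\psi)$'' is an assertion, not an argument. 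To make the DHM step go through you must either (i) exhibit a concrete continuous $\epsilon$-selection (e.g.\ via an explicit linear encoder into $\mathbb{R}^N$ followed by the network decoder, which is how nonlinear-width lower bounds are usually applied to networks), or (ii) abandon DHM and obtain the log-free rate directly from a combinatorial/oscillation count tailored to piecewise-$C^1$ activations with $\sigma'\in\mathrm{BV}$. Without one of these, your argument stops at $N=\Omega\bigl(\epsilon^{-n/r}/\log(1/\epsilon)\bigr)$.
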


\subsubsection{Lower bound for the complexity of the DeepONet}
Now, we provide the minimum number of parameters in DeepONet. The following theorem presents a criterion on the DeepONet's complexity to get the desired error. It states that the number of required parameters increases when the target functions are irregular, corresponding to a small $r$. $\mathcal{F}_{\text{DeepONet}}(\mathcal{B}, \mathcal{T})$ denotes the class of function in DeepONet, induced by the class of branch net $\mathcal{B}$ and the class of trunk net $\mathcal{T}$. 

\begin{theorem}\label{theorem_main}(Complexity of DeepONet)
  Let $\sigma:\mathbb{R}\rightarrow \mathbb{R}$ be a universal activation function in $C^r (\mathbb{R})$ such that $\sigma$ and $\sigma'$ are bounded. Suppose that the class of branch net $\mathcal{B}$ has a bounded Sobolev norm (i.e., $\|\beta\|_{r}^{s}\le l_1, \forall \beta\in \mathcal{B}$). Suppose any non-constant $\psi\in \mathcal{W}_{r, n}$ does not belong to any class of neural network. In that case, the number of parameters in the class of trunk net $\mathcal{T}$ is $\Omega (\epsilon ^{-d_y/r})$ when $d(\mathcal{F}_{\text{DeepONet}}(\mathcal{B}, \mathcal{T}); \mathcal{W}_{r, d_y +m})\le \epsilon$.
\end{theorem}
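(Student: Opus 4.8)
The plan is to derive Theorem \ref{theorem_main} from Theorem \ref{theorem_lower bound params_num} by a restriction (``slicing'') argument: I will show that any DeepONet achieving accuracy $\epsilon$ over the whole Sobolev ball $\mathcal{W}_{r,d_y+m}$ secretly contains, after freezing the branch (sensor) input, a fully connected network class that must already approximate $\mathcal{W}_{r,d_y}$ to accuracy $\epsilon$ and whose parameter count is controlled by that of the trunk net $\mathcal{T}$. Theorem \ref{theorem_lower bound params_num}, applied with $n=d_y$, then forces that count to be $\Omega(\epsilon^{-d_y/r})$.

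First, the lifting step. Given any $\psi\in\mathcal{W}_{r,d_y}$, define $\Psi:[-1,1]^{m}\times[-1,1]^{d_y}\to\mathbb{R}$ by $\Psi(z,y):=\psi(y)$. Every partial derivative of $\Psi$ in a $z$-direction vanishes, so $\|\Psi\|_r^s=\|\psi\|_r^s\le 1$ and hence $\Psi\in\mathcal{W}_{r,d_y+m}$. Since $d(\mathcal{F}_{\text{DeepONet}}(\mathcal{B},\mathcal{T});\mathcal{W}_{r,d_y+m})\le\epsilon$, there is a DeepONet function $f(z,y)=\tau_0+\sum_{k=1}^p\beta_k(z)\tau_k(y)$ with $\beta\in\mathcal{B}$, $\tau\in\mathcal{T}$, and $\sup_{z,y}|f(z,y)-\psi(y)|\le\epsilon$ (up to an arbitrarily small slack if the infimum in the definition of $d$ is not attained). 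Now freeze the sensor input at an arbitrary point $z_0\in[-1,1]^m$. The function $g(y):=f(z_0,y)=\bigl(\beta(z_0)^{\!\top}W^{k}\bigr)\sigma(\cdots)+\bigl(\tau_0+\beta(z_0)^{\!\top}b^{k}\bigr)$ is exactly a fully connected network with the same hidden architecture as the trunk net and a single scalar output: absorb $\beta(z_0)$ into the last weight matrix and $\tau_0$ into the last bias. Hence $g$ lies in a network class $\hat{\mathcal{F}}$ whose number of parameters is at most that of $\mathcal{T}$ (collapsing the $p\ge 1$ trunk outputs to one output cannot increase the parameter count).

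Since $\sup_y|g(y)-\psi(y)|\le\epsilon$ for \emph{every} $\psi\in\mathcal{W}_{r,d_y}$, we get $d(\hat{\mathcal{F}};\mathcal{W}_{r,d_y})\le\epsilon$. The activation $\sigma$, being in $C^r(\mathbb{R})$ with $\sigma,\sigma'$ bounded, is in particular piecewise $C^1$ with $\sigma'$ of bounded variation, and by hypothesis no non-constant element of $\mathcal{W}_{r,d_y}$ belongs to $\hat{\mathcal{F}}$; so Theorem \ref{theorem_lower bound params_num} applies with $n=d_y$ and gives that $\hat{\mathcal{F}}$, and therefore $\mathcal{T}$, has $\Omega(\epsilon^{-d_y/r})$ parameters, which is the claim.

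The main thing to get right is the bookkeeping in the freezing step: one must verify that $\hat{\mathcal{F}}$ is a bona fide fully connected network class in the precise sense used by Theorem \ref{theorem_lower bound params_num} (the reshaping of the output layer and the reabsorption of the constant $\tau_0$ introduce no extra structure), and that the regularity hypotheses on $\sigma$ transfer unchanged. I expect this to be the only real obstacle, since the rest is a short reduction. Note that the boundedness assumption $\|\beta\|_r^s\le l_1$ on the branch class is not what drives the parameter count — the lifted targets $\Psi$ are constant in $z$, so the branch net cannot contribute to approximating them and all complexity is forced onto the trunk net — but it is what keeps the DeepONet outputs inside a fixed Sobolev ball, so that ``matching $\mathcal{W}_{r,d_y+m}$ at scale $\epsilon$'' is the correct normalization; I would keep it in the statement for that reason. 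A secondary, purely technical point is the possible non-attainment of the infima defining $d(\cdot;\cdot)$, handled by an arbitrarily small slack absorbed into the $\Omega(\cdot)$.
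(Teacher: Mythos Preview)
Your slicing argument is correct and takes a genuinely different, more elementary route than the paper. The paper proceeds by contradiction in full dimension $d_y+m$: assuming $N_{\mathcal{T}}=o(\epsilon^{-d_y/r})$, it (i) replaces each branch component by a neural-network approximant of size $O(\epsilon^{-m/r})$ (this is where the hypothesis $\|\beta\|_r^s\le l_1$ is actually used), (ii) approximates the $p$-fold inner product $\pi_p$ by a small network via Lemma~\ref{lemma_inner product}, and (iii) assembles these into a single network on $[-1,1]^{d_y+m}$ whose total parameter count is $o(\epsilon^{-(d_y+m)/r})$, contradicting Theorem~\ref{theorem_lower bound params_num} with $n=d_y+m$. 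Your argument bypasses all three steps: lifting $\psi\in\mathcal{W}_{r,d_y}$ trivially to $\Psi(z,y)=\psi(y)\in\mathcal{W}_{r,d_y+m}$ and freezing $z=z_0$ collapses the DeepONet to an element of a fixed scalar-output class $\hat{\mathcal{F}}$ with no more parameters than $\mathcal{T}$, and Theorem~\ref{theorem_lower bound params_num} applies directly with $n=d_y$. Neither Lemma~\ref{lemma_inner product} nor the Sobolev bound on $\mathcal{B}$ is needed. What the paper's longer route buys is an intermediate result (Lemma~\ref{lemma_param_num_trunk}) lower-bounding the number of basis functions $p$ itself, and a template that extends to the total-complexity Theorem~\ref{Thoerem_DeepONet_Total}; your reduction yields only the trunk-net parameter bound, but that is exactly what Theorem~\ref{theorem_main} asserts.

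Two minor corrections. Your remark that the bound $\|\beta\|_r^s\le l_1$ ``keeps the DeepONet outputs inside a fixed Sobolev ball, so that matching $\mathcal{W}_{r,d_y+m}$ at scale $\epsilon$ is the correct normalization'' is not the right justification---the distance $d(\cdot;\mathcal{W}_{r,d_y+m})$ is defined through the target class alone---and in fact your proof never invokes that hypothesis; it is genuinely superfluous for your argument. Second, the claim ``$\sigma,\sigma'$ bounded $\Rightarrow$ $\sigma'$ of bounded variation'' is false in general; this mismatch between the hypotheses of Theorem~\ref{theorem_main} and those required by Theorem~\ref{theorem_lower bound params_num} is already present in the paper, so it is not a flaw peculiar to your approach, but you should not assert the implication.
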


The core of this proof is showing that the inner product between the branch net and the trunk net could be replaced with a neural network that has a low complexity (Lemma \ref{lemma_inner product}). Therefore, the entire structure of DeepONet could be replaced with a neural network that receives $[\mathcal{E}(u),y]\in \mathbb{R}^{d_y+m}$ as input. It gives the lower bound for the number of parameters in DeepONet based on Theorem \ref{theorem_lower bound params_num}. The proof can be found in Appendix \ref{appendix:proof}.


The analogous results holds for variant models of DeepONet. Models such as Shift-DeepONet and flexDeepONet could achieve the desired accuracy with a small number of basis. Still, there was a trade-off in which the first hidden layer of the target network required numerous units. There was no restriction on the dimension of the last hidden layer in the target network for NOMAD, which uses a fully nonlinear reconstruction. However, the first hidden layer of the target network also had to be wide enough, increasing the number of parameters. Details can be found in Appendix \ref{appendix:variants_proof}.

For the proposed HyperDeepONet, the sensor values $\mathcal{E}(u)$ determine the weight and bias of all other layers as well as the weight of the last layer of the target network. Due to the nonlinear activation functions between linear matrix multiplication, it is difficult to replace HyperDeepONet with a single neural network that receives $[\mathcal{E}(u),y]\in \mathbb{R}^{d_y+m}$ as input. \citet{galanti2020modularity} state that there exists a hypernetwork structure (HyperDeepONet) such that the number of parameters in the target network is $O(\epsilon^{-d_y/r})$. It implies that the HyperDeepONet reduces the complexity compared to all the variants of DeepONet.

\section{Experiments}
In this section, we verify the effectiveness of the proposed model HyperDeepONet to learn the operators with a complicated target function space. To be more specific, we focus on operator learning problems in which the space of output function space is complicated. Each input function $u_i$ generates multiple triplet data points $(u_i,y,\mathcal{G}(u)(y))$ for different values of $y$. Except for the shallow water problem, which uses 100 training function pairs and 20 test pairs, we use 1,000 training input-output function pairs and 200 test pairs for all experiments.

For the toy example, we first consider the \textbf{identity operator} $\mathcal{G}:u_i\mapsto u_i$. The Chebyshev polynomial is used as the input (=output) for the identity operator problem. The Chebyshev polynomials of the first kind $T_l$ of degree 20 can be written as $u_i\in\{\sum_{l=0}^{19}c_lT_l(x)|c_l\in [-1/4,1/4]\}$ with random sampling $c_l$ from uniform distribution $U[-1/4,1/4]$. The \textbf{differentiation operator} $\mathcal{G}:u_i\mapsto \frac{d}{dx}u_i$ is considered for the second problem. Previous works handled the anti-derivative operator, which makes the output function smoother by averaging \citep{lu2019deeponet,lu2022comprehensive}. Here, we choose the differentiation operator instead of the anti-derivative operator to focus on operator learning when the operator's output function space is complicated. We first sample the output function $\mathcal{G}(u)$ from the above Chebyshev polynomial of degree 20. The input function is generated using the numerical method that integrates the output function.

Finally, the solution operators of PDEs are considered. We deal with two problems with the complex target function in previous works \citep{lu2022comprehensive,hadorn2022shift}. The \textbf{solution operator of the advection equation} is considered a mapping from the rectangle shape initial input function to the solution $w(t,x)$ at $t=0.5$, i.e., $\mathcal{G}:w(0,x)\mapsto w(0.5,x)$. We also consider the \textbf{solution operator of Burgers' equation} which maps the random initial condition to the solution $w(t,x)$ at $t=1$, i.e., $\mathcal{G}:w(0,x)\mapsto w(1,x)$. The solution of the Burgers' equation has a discontinuity in a short time, although the initial input function is smooth. For a challenging benchmark, we consider the \textbf{solution operator of the shallow water equation} that aims to predict the fluid height $h(t, x_1, x_2)$ from the initial condition $h(0, x_1, x_2)$, i.e., $\mathcal{G}:h(0,x_1,x_2)\mapsto h(t, x_1, x_2)$ (Figure \ref{fig:sample_shallow}). In this case, the input of the target network is three dimension ($t,x_1,x_2$), which makes the solution operator complex. Detail explanation is provided in Appendix \ref{appendix:details}.

\begin{table}[]
\begin{center}
\resizebox{0.8\columnwidth}{!}{
\begin{tabular}{cccccc}
\hline
Model           & \multicolumn{1}{r}{DeepONet} & \multicolumn{1}{r}{Shift} & \multicolumn{1}{r}{Flex} & \multicolumn{1}{r}{NOMAD} & \multicolumn{1}{r}{Hyper(ours)} \\ \hline
Identity        &       0.578$\pm$0.003                       &       0.777$\pm$0.018                    &        0.678$\pm$0.062                  &         0.578$\pm$0.020                  &    \textbf{0.036$\pm$0.005}                                     \\
Differentiation &       0.559$\pm$0.001                       &       0.624$\pm$0.015                    &        0.562$\pm$0.016                  &         0.558$\pm$0.003                  &      \textbf{0.127$\pm$0.043}                              \\ \hline
\end{tabular}
}
\end{center}
\caption{The mean relative $L^2$ test error with standard deviation for the identity operator and the differentiation operator. The DeepONet, its variants, and the HyperDeepONet use the target network $d_y$-20-20-10-1 with $\textit{tanh}$ activation function. Five training trials are performed independently.}
\label{table:same_target_error}
\vspace{-4mm}
\end{table}

\begin{figure}[]
\begin{center}
\includegraphics[width=0.8\textwidth]{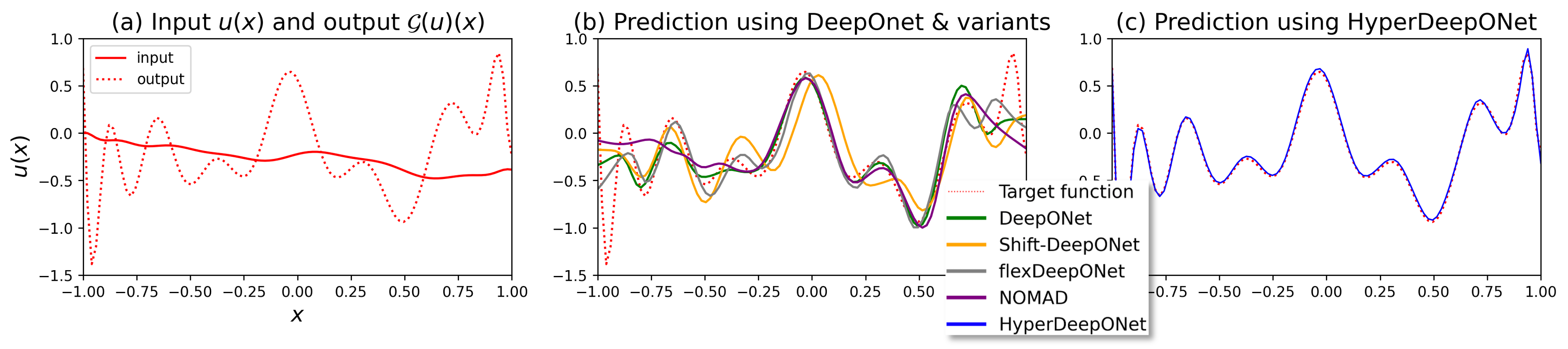}
\end{center}
\caption{One test data example of differentiation operator problem.}
\label{fig:sample_diff}
\vspace{-7mm}
\end{figure}

\textbf{Expressivity of target network.} We have compared the expressivity of the small target network using different models. We focus on the identity and differentiation operators in this experiment. All models employ the small target network $d_y$-20-20-10-1 with the hyperbolic tangent activation function. The branch net and the additional networks (scale net, shift net, pre-net, and hypernetwork) also use the same network size as the target network for all five models.

\begin{table}[]
\begin{center}
\resizebox{0.83\columnwidth}{!}{
\begin{tabular}{clcccc}
\hline
\multicolumn{2}{c}{}          & Branch net (Hypernetwork)  & Target              & $\#$Param & Rel error \\ \hline
\multirow{3}{*}{Advection} & DeepONet    & $m$-256-256         & $d_y$-256-256-256-256-1 & 274K  &  0.0046$\pm$0.0017        \\
                           & Hyper(ours) & $m$-70-70-70-70-70-$N_{\theta}$         & $d_y$-33-33-33-33-1 & 268K  & 0.0048$\pm$0.0009         \\
                           & c-Hyper(ours) & $m$-128-128-128-128-128-$1024$         & $d_y$-256-256-256-256-1 & 208K  & \textbf{0.0043$\pm$0.0004}         \\ \hline

\multirow{3}{*}{Burgers}   & DeepONet    & $m$-128-128-128-128 & $d_y$-128-128-128-128-1 & 115K  &    0.0391$\pm$0.0040     \\
                           & Hyper(ours) & $m$-66-66-66-66-66-$N_\theta$ & $d_y$-20-20-20-20-1 & 114K  & 0.0196$\pm$0.0044        \\
                           & c-Hyper(ours) & $m$-66-66-66-66-66-$512$ & $d_y$-128-128-128-128-1 & 115K  & \textbf{0.0066$\pm$0.0009}        \\ \hline

\multirow{2}{*}{Shallow} & DeepONet    & $m$-100-100-100-100         & $d_y$-100-100-100-100-1 & 107K  &  0.0279 $\pm$ 0.0042        \\
                           & Hyper(ours) & $m$-30-30-30-30-$N_{\theta}$         & $d_y$-30-30-30-30-1 & 101K  & \textbf{0.0148 $\pm$ 0.0002}        \\\hline
\multirow{2}{*}{\begin{tabular}[c]{@{}c@{}}Shallow\\ w/ small param\end{tabular}} & DeepONet    & $m$-20-20-10         & $d_y$-20-20-10-1 & 6.5K  & 0.0391 $\pm$ 0.0066       \\
                           & Hyper(ours) & $m$ -10-10-10-$N_{\theta}$         & $d_y$-10-10-10-1 & 5.7K  & \textbf{0.0209 $\pm$ 0.0013}       \\ \hline                           
\end{tabular}
}
\end{center}
\caption{The mean relative $L^2$ test error with standard deviation for solution operator learning problems. $N_\theta$ and $\#$Param denote the number of parameters in the target network and the number of learnable parameters, respectively. Five training trials are performed independently.}
\label{table:same_param_error}
\end{table}

Table \ref{table:same_target_error} shows that the DeepONet and its variant models have high errors in learning complex operators when the small target network is used. In contrast, the HyperDeepONet has lower errors than the other models. This is consistent with the theorem in the previous section that HyperDeepONet can achieve improved approximations than the DeepONet when the complexity of the target network is the same. Figure \ref{fig:sample_diff} shows a prediction on the differentiation operator, which has a highly complex target function. The same trends are observed when the activation function or the number of sensor points changes (Table \ref{table:Identity Operator_various}) and the number of layers in the branch net and the hypernetwork vary (Figure \ref{fig:same_target_diff_branch_layer_width}).

\textbf{Same number of learnable parameters.} The previous experiments compare the models using the same target network structure. In this section, the comparison between the DeepONet and the HyperDeepONet is considered when using the same number of learnable parameters. We focus on the solution operators of the PDEs.

For the three solution operator learning problems, we use the same hyperparameters proposed in \citet{lu2022comprehensive} and \citet{seidman2022nomad} for DeepONet. First, we use the smaller target network with the larger hypernetwork for the HyperDeepONet to compare the DeepONet. 
Note that the vanilla DeepONet is used without the output normalization or the boundary condition enforcing techniques explained in \citet{lu2022comprehensive} to focus on the primary limitation of the DeepONet. More Details are in Appendix \ref{appendix:details}. Table \ref{table:same_param_error} shows that the HyperDeepONet achieves a similar or better performance than the DeepONet when the two models use the same number of learnable parameters. The HyperDeepONet has a slightly higher error for advection equation problem, but this error is close to perfect operator prediction. It shows that the complexity of target network and the number of learnable parameters can be reduced to obtain the desired accuracy using the HyperDeepONet. The fourth row of Table \ref{table:same_param_error} shows that HyperDeepONet is much more effective than DeepONet in approximating the solution operator of the shallow water equation when the number of parameters is limited. Figure \ref{fig:sample_solution_operator} and Figure \ref{fig:shallow(loss)} show that the HyperDeepONet learns the complex target functions in fewer epochs for the desired accuracy than the DeepONet although the HyperDeepONet requires more time to train for one epoch (Table \ref{table:time}).

\begin{figure}[]
\begin{center}
\includegraphics[width=0.8\textwidth]{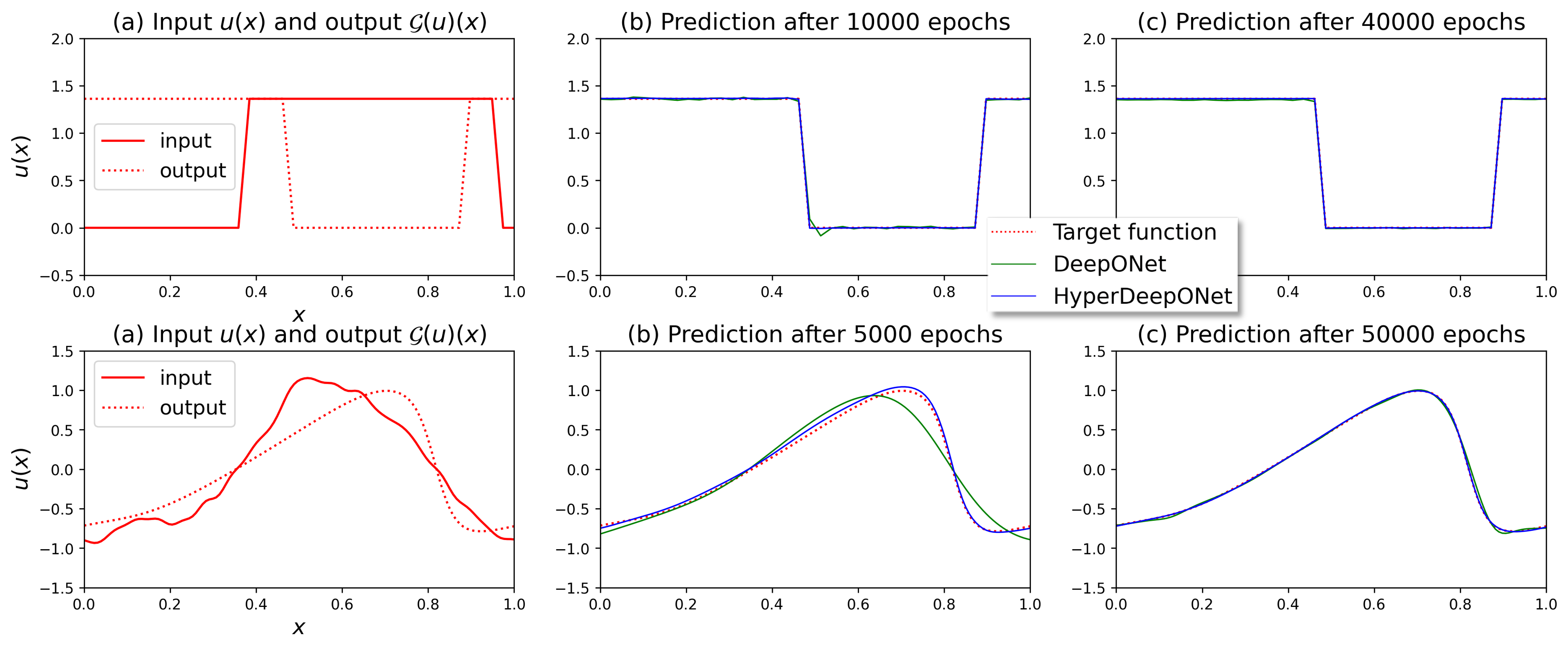}
\end{center}
\caption{One test data example of prediction on the advection equation (First row) and Burgers' equation (Second row) using the DeepONet and the HyperDeepONet.}
\label{fig:sample_solution_operator}
\vspace{-5mm}
\end{figure}

\textbf{Scalability.} When the size of the target network for the HyperDeepONet is large, the output of the hypernetwork would be high-dimensional \citep{ha2017hypernetworks,pawlowski2017implicit} so that its complexity increases. In this case, the chunked HyperDeepONet (c-HyperDeepONet) can be used with a trade-off between accuracy and memory based on the chunk embedding method developed by \citet{Oswald2020Continual}. It generates the subset of target network parameters multiple times iteratively reusing the smaller chunked hypernetwork. The c-HyperDeepONet shows a better accuracy than the DeepONet and the HyperDeepONet using an almost similar number of parameters, as shown in Table \ref{table:same_param_error}. However, it takes almost 2x training time and 2$\sim$30x memory usage than the HyperDeepOnet. More details on the chunked hypernetwork are in Appendix \ref{appendix:chunked}.


\section{Conclusion and discussion}
In this work, the HyperDeepONet is developed to overcome the expressivity limitation of DeepONet. The method of incorporating an additional network and a nonlinear reconstructor could not thoroughly solve this limitation. The hypernetwork, which involves multiple weights simultaneously, had a desired complexity-reducing structure based on theory and experiments.

We only focused on when the hypernetwork and the target network is fully connected neural networks. In the future, the structure of the two networks can be replaced with a CNN or ResNet, as the structure of the branch net and trunk net of DeepONet can be changed to another network \citep{lu2022comprehensive}. Additionally, it seems interesting to research a simplified modulation network proposed by \citet{mehta2021modulated}, which still has the same expressivity as HyperDeepONet.

Some techniques from implicit neural representation can improve the expressivity of the target network \citep{sitzmann2020implicit}. Using a sine function as an activation function with preprocessing will promote the expressivity of the target network. We also leave the research on the class of activation functions satisfying the assumption except for hyperbolic tangent or sigmoid functions as a future work.

\subsubsection*{Acknowledgments}
J. Y. Lee was supported by a KIAS Individual Grant (AP086901) via the Center for AI and Natural Sciences at Korea Institute for Advanced Study and by the Center for Advanced Computation at Korea Institute for Advanced Study. H. J. Hwang and S. W. Cho were supported by the National Research Foundation of Korea (NRF) grant funded by the Korea government (MSIT) (RS-2022-00165268) and by Institute for Information \& Communications Technology Promotion (IITP) grant funded by the Korea government(MSIP) (No.2019-0-01906, Artificial Intelligence Graduate School Program (POSTECH)).

\bibliography{iclr2023_conference}
\bibliographystyle{iclr2023_conference}

\newpage
\appendix

\section{Notations}
We list the main notations in Table \ref{Table:notation} which is not concretely described in this paper.
\begin{table}[H]
\begin{tabularx}{\textwidth}{@{}XX@{}}
\toprule
  $\mathcal{U}$& Domain of input function\\
  $\mathcal{Y}$&  Domain of output function \\
  $d_u$& Dimension of $\mathcal{U}$\\
  $d_y$& Dimension of $\mathcal{Y}$\\
  $d_x$& Dimension of the codomain of input function\\
  $d_s$& Dimension of the codomain of output function\\
    $\left\{x_1, \cdots, x_m \right\}$ & Sensor points\\ 
  $m$ & Number of sensor points \\
  $\mathbb{R}^d$& Euclidean space of dimension $d$\\
  $C^r(\mathbb{R})$ & Set of functions that has continuous $r$-th derivative.\\
  $C(\mathcal{Y};\mathbb{R}^{d_s})$ & Set of continuous function from $\mathcal{Y}$ to 
  $\mathbb{R}^{d_s}$  \\
  $C^r ([-1, 1]^{n} ; \mathbb{R})$ & Set of functions from $[-1, 1]^n$ to $\mathbb{R}$ whose $r$-th partial derivatives are continuous.\\
  $n=O(\epsilon)$ & There exists a constant $C$ such that $n\leq C\cdot \epsilon, \forall \epsilon>0$ \\
  $n=\Omega(\epsilon)$ & There exists a constant $C$ such that $n \geq C\cdot \epsilon, \forall\epsilon>0$ \\
  $n=o(\epsilon)$   & $n/\epsilon$ converges to 0 as $\epsilon$ approaches to 0. \\
\bottomrule
\end{tabularx}
\caption{Notation}
\label{Table:notation}
\end{table}

\section{Original structure of DeepONet}\label{appendix:deeponet}
\begin{figure}[ht]
\begin{center}
\includegraphics[width=0.8\textwidth]{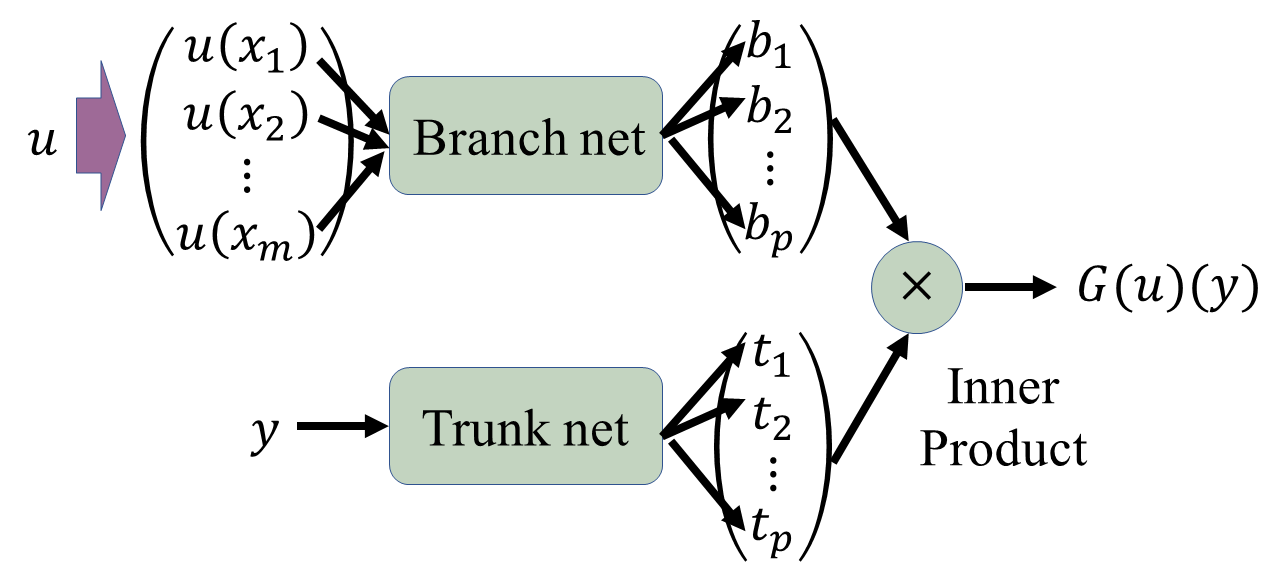}
\end{center}
\caption{The structure of DeepONet}
\label{fig:unstacked_DeepONet}
\end{figure}

For a clear understanding of previous works, we briefly leave a description of DeepONet. In particular, we explain the structure of the unstacked DeepONet in \citet{lu2019deeponet} which is being widely used in various experiments of the papers. Note that Figure \ref{fig:deeponet_variants}(a) represents the corresponding model which is called simply DeepONet throughout this paper. The overall architecture of the model is formulated as
    \begin{equation*}
            \mathcal{R}_{\tau}\circ\mathcal{A}_{\beta} (u(x_1), \cdots , u(x_m)) (y):= \langle \beta(u(x_1), \cdots, u(x_m) ;\theta_{\beta}), \tau(y; \theta_{\tau})\rangle,
    \end{equation*}
where $\tau$ and $\beta$ are referred to as the trunk net and the branch net respectively. Note that $R_{\tau}$ and $A_{\beta}$ denote the reconstructor and the operator in Section \ref{section3.1:problem setting}. For $m$ fixed observation points $(x_1, \cdots, x_m)\in \mathcal{X}^{m}$, the unstacked DeepONet consists of an inner product of branch Net and trunk Net, which are fully connected neural networks. For a function $u$, the branch net receives pointwise projection values $(u(x_1), \cdots, u(x_m))$ as inputs to detect which function needs to be transformed. The trunk net queries a location $y\in \mathcal{Y}$ of interest where $\mathcal{Y}$ denotes the domain of output functions.

\par It was revealed that the stacked DeepONet, the simplified version of the unstacked DeepONet, is a universal approximator in the set of continuous functions. Therefore, the general structure also becomes a universal approximator which enables close approximation by using a sufficient number of parameters. Motivated by the property, we focus on how large complexity should be required for DeepONet and its variants to achieve the desired error.

\section{On complexity of DeepONet and its variants}
\subsection{Proof of Theorem \ref{theorem_main} on DeepONet complexity}\label{appendix:proof}
The following lemma implies that the class of neural networks is sufficiently efficient to approximate the inner product.
\begin{lemma} \label{lemma_inner product} For the number of basis $p\in \mathbb{N}$, consider the inner product function $\pi_{p}:[-1, 1]^{2p}\rightarrow\mathbb{R}$ defined by 
\begin{equation*}
\pi_{p}(a_1, \cdots, a_p, b_1, \cdots, b_p):=\sum_{i=1}^{p}a_i  b_i =\langle (a_1, \cdots, a_p), (b_1, \cdots, b_p)\rangle.
\end{equation*}
For an arbitrary positive $t$, there exists a class of neural network $\mathcal{F}$ with universal activation $\sigma:\mathbb{R}\rightarrow\mathbb{R}$ such that the number of parameters of $\mathcal{F}$ is $O(p^{1+1/t} \epsilon ^{-1/t})$ with $\inf_{f\in \mathcal{F}} \|f-\pi_{p}\|_{\infty}\le \epsilon$.
\end{lemma}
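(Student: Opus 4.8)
The plan is to reduce the $2p$-variate bilinear map $\pi_p$ to $2p$ parallel copies of the single-variable squaring map $q(z):=z^2$, and then invoke the universal-activation hypothesis on that scalar map. The starting point is the polarization identity
\[
\pi_p(a_1,\dots,a_p,b_1,\dots,b_p)=\sum_{i=1}^p a_i b_i=\sum_{i=1}^p\Big[q\big(\tfrac{a_i+b_i}{2}\big)-q\big(\tfrac{a_i-b_i}{2}\big)\Big],
\]
which holds since $(a+b)^2-(a-b)^2=4ab$. Because $a_i,b_i\in[-1,1]$, every argument $\tfrac{a_i\pm b_i}{2}$ lies in $[-1,1]$, so $\pi_p$ is an explicit affine combination of $2p$ evaluations of $q$ on its natural domain. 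It thus suffices to approximate $q$ by a small subnetwork and stitch $2p$ copies together with affine maps.

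\textbf{The scalar step.} I would first note that $q$ is a polynomial, hence $C^\infty$, with $\|q\|_r^s=\|q\|_\infty+\sum_{k=1}^{r}\|q^{(k)}\|_\infty\le 5=:c$ for every $r$, so $q/c\in\mathcal{W}_{r,1}$ for all $r\in\mathbb{N}$. Fix any integer $r\ge t$; by Definition \ref{universal activation} with $n=1$ there is a $\sigma$-network with $O(\delta^{-1/r})$ parameters approximating $q/c$ within $\delta$ in $\|\cdot\|_\infty$ on $[-1,1]$. Scaling its output weights by $c$ produces $\hat q$ with $\|\hat q-q\|_\infty\le c\delta$ and the same parameter count. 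Since (in the regime $\epsilon\to 0$ of interest) $\delta\le 1$ and $r\ge t$, we have $\delta^{-1/r}\le\delta^{-1/t}$, so $\hat q$ costs $O(\delta^{-1/t})$ parameters.

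\textbf{Assembly and bookkeeping.} I would then build $f$ as $2p$ parallel scalar subnetworks — two per index $i$, applied to $\tfrac{a_i+b_i}{2}$ and to $\tfrac{a_i-b_i}{2}$, each a copy of the squaring subnetwork whose (sparse) first layer reads off the two relevant input coordinates — followed by one linear layer forming $\sum_{i=1}^p\big(\hat q(\tfrac{a_i+b_i}{2})-\hat q(\tfrac{a_i-b_i}{2})\big)$; since each subnetwork ends in a linear layer, this last layer merges with the summation into a single linear output layer, so $f$ is an admissible feedforward network (with a block-diagonal weight pattern). By the triangle inequality $\|f-\pi_p\|_\infty\le 2pc\delta$ on $[-1,1]^{2p}$, so choosing $\delta=\epsilon/(2cp)$ gives $\|f-\pi_p\|_\infty\le\epsilon$. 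Counting parameters, the $2p$ subnetworks contribute $2p\cdot O(\delta^{-1/t})$ and the output layer $O(p)$, for a total of $O\big(p(p/\epsilon)^{1/t}\big)=O(p^{1+1/t}\epsilon^{-1/t})$, as claimed.

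\textbf{Where the work is.} The lemma is elementary, and I expect the only mild obstacles to be bookkeeping: (i) making sure the ``$2p$ parallel subnetworks plus a shared linear output'' gadget counts as an admissible neural-network class — one counts the essential (nonzero) parameters of the block-diagonal architecture, the standard convention in these complexity bounds; and (ii) reconciling the integer exponents $r,n\in\mathbb{N}$ in Definition \ref{universal activation} with the arbitrary real $t$, which works because $q\in C^\infty$ (any integer $r\ge t$ is available) together with $\delta^{-1/r}\le\delta^{-1/t}$ for $\delta\le 1$. All remaining constants are absorbed into the $O(\cdot)$.
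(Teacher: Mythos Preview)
Your argument is correct, and the bookkeeping is clean. It is, however, a genuinely different route from the paper's.

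The paper approximates the bivariate product $\pi_1(a,b)=ab$ directly as a two-variable function: it observes that $\pi_1/3\in\mathcal{W}_{2t,2}$ and invokes Definition~\ref{universal activation} with $n=2$, $r=2t$ to get a subnetwork of size $O((\epsilon/p)^{-2/2t})=O(p^{1/t}\epsilon^{-1/t})$ approximating each term $a_ib_i$ to accuracy $\epsilon/p$; then it stacks $p$ such subnetworks and sums. You instead reduce further via polarization to the univariate squaring map $q(z)=z^2$ and invoke Definition~\ref{universal activation} with $n=1$ and an integer $r\ge t$, obtaining $2p$ scalar subnetworks. Both routes land on the same $O(p^{1+1/t}\epsilon^{-1/t})$ count.

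Trade-offs: your reduction is marginally more elementary (only one-dimensional approximation is needed) and handles non-integer $t$ explicitly through the monotonicity $\delta^{-1/r}\le\delta^{-1/t}$; the paper treats integer $t$ first and then asserts the real-$t$ case ``obviously'' follows. The paper's version is slightly more direct (no polarization, $p$ rather than $2p$ blocks), but neither advantage is essential at the level of the asymptotic claim.
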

\begin{proof}
    Suppose that $t$ is a positive integer and the Sobolev space $W_{2t, 2}$ is well defined. First, we would like to approximate the product function $\pi_{1}: [-1, 1]^2 \rightarrow \mathbb{R}$ which is defined as 
    \begin{equation*}
        \pi_{1}(a, b)=a b. 
    \end{equation*}
    Note that partial derivatives $D^\mathbf{k} \pi_{1} =0$ for all multi-index $\mathbf{k}\in \left\{\mathbb{N}\bigcup \left\{0\right\}\right\}^2$ such that $|\mathbf{k}|\ge 2$.
      For a multi-index $\mathbf{k}$ with $|\mathbf{k} |=1$, $D^{\mathbf{k}} \pi_{1}$ contains only one term which is either $a$ or $b$. In this case, we can simply observe that $\sum_{|\mathbf{k}|=1} \|D^{\mathbf{k}} \pi_{1} \|_{\infty} \le 2 \cdot 1 = 2$ by the construction of the domain $[-1, 1]^{2}$ for $\pi_{1}$. And finally, 
    \begin{align}
        \nonumber \|\pi_{1}\|_{\infty} \le \|a b\|_{\infty} \le \|a\|_{\infty}\|b\|_{\infty} \le 1\cdot 1 = 1,
    \end{align}
    so that a function $\pi_1/3$ should be contained in $\mathcal{W}_{r, 2}$ for any $r\in \mathbb{N}$. In particular, $\pi_1/3$ lies in $\mathcal{W}_{2t, 2}$ so that there exists a neural network approximation $f_{nn}$ in some class of neural network $\mathcal{F}^*$ with an universal activation function $\sigma$ such that the number of parameters of $\mathcal{F}^*$ is $O((\epsilon/3p)^{-2/2t})=O(p^{1/t}\epsilon^{-1/t})$, and
    \begin{align}
        \nonumber \|\pi_1/3-f_{nn}\|_{\infty}\le \epsilon/3p,  
    \end{align} by Definition \ref{universal activation}. Then the neural network $3 f_{nn}$ approximates the function $\pi_1$ by an error $\epsilon/p$ which can be constructed by adjusting the last weight values directly involved in the output layer of neural network $f_{nn}$. 
    \par Finally, we construct a neural network approximation for the inner product function $\pi_p$. Decompose the $2p-$dimensional inner product function $\pi_p$ into $p$ product functions $\left\{\text{Proj}_{i}(\pi_{p}) \right\}_{i=1}^{p}$ which are defined as
    \begin{align}
        \nonumber \text{Proj}_i(\pi_p):\mathbb{R}^{2p}\rightarrow\mathbb{R}, \quad \text{Proj}_i(\pi_p)(a_1, \cdots, a_p, b_1, \cdots, b_p):=\pi_1(a_i, b_i)=a_i  b_i,  
    \end{align}
    for $\forall i \in \left\{1, \cdots, p\right\}$. Then each function $\text{Proj}_i(\pi_p)$ could be approximated within an error $\epsilon/p$ by neural network $NN_{i}$ which has $O(p^{1/t}\epsilon^{-1/t})$ parameters by the above discussion. Finally, by adding the last weight $[1, 1, \cdots, 1]\in \mathbb{R}^{1 \times p}$ which has input as the outputs of $p$ neural networks $\left\{NN_{i}\right\}_{i=1}^{p}$, we can construct the neural network approximation $NN$ of $\pi_{p}=\sum_{i=1}^{p} \text{Proj}_i(\pi_p)$ such that the number of parameters is $O(1+p+p\cdot p^{1/t}\epsilon^{-1/t})=O(p^{1+1/t} \epsilon ^{-1/t})$. Class of neural network $\mathcal{F}$, which represents the structure of $NN$, satisfies the desired property.

    Obviously, the statement holds for an arbitrary real $t$ which is not an integer. 
    
\end{proof}
Now we assume that $\mathcal{O}$ (defined in Eq. (\ref{eq_O})) lies in the Sobolev space $\mathcal{W}_{r, d_y +m}$. Then, we can obtain the following lemma which presents the lower bound on the number of basis $p$ in DeepONet structure. Note that we apply $L_{\infty}$-norm for the outputs of branch net and trunk net which are multi-dimensional vectors.
\begin{lemma}\label{lemma_param_num_trunk}
  Let $\sigma:\mathbb{R}\rightarrow \mathbb{R}$ be a universal activation function in $C^r (\mathbb{R})$ such that $\sigma'$ is a bounded variation. Suppose that the class of branch net $\mathcal{B}$ has a bounded Sobolev norm (i.e., $\|\beta\|_{r}^{s}\le l_1, \forall \beta\in \mathcal{B}$). Assume that supremum norm $\|\cdot \|_{\infty}$ of the class of trunk net $\mathcal{T}$ is bounded by $l_2$ and the number of parameters in $\mathcal{T}$ is $o(\epsilon ^{-(d_{y}+m)/r})$. If any non-constant $\psi\in \mathcal{W}_{r,d_y +m}$ does not belong to any class of neural network, then the number of basis $p$ in $\mathcal{T}$ is $\Omega (\epsilon ^{-d_y/r})$ when $d(\mathcal{F}_{\text{DeepONet}}(\mathcal{B}, \mathcal{T}); \mathcal{W}_{r, d_y +m})\le \epsilon$.
\end{lemma}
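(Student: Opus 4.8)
The plan is to realize the entire DeepONet map $(\mathcal{E}(u),y)\mapsto \tau_0+\langle\beta(\mathcal{E}(u)),\tau(y)\rangle$, up to an $O(\epsilon)$ sup-norm error, as a single fully connected network on $[-1,1]^{m+d_y}$, to count its parameters as (branch part) $+$ (trunk part) $+$ (inner-product part), and then to apply Theorem \ref{theorem_lower bound params_num} to this combined class to force the total to be $\Omega(\epsilon^{-(d_y+m)/r})$. Since the trunk part is $o(\epsilon^{-(d_y+m)/r})$ by hypothesis and the branch part is kept small through its Sobolev bound, the inner-product part — the only piece in which the number of basis $p$ appears — must carry the growth, and unravelling this yields $p=\Omega(\epsilon^{-d_y/r})$. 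Conceptually this is the bilinear analogue of the principle behind Theorem \ref{theorem_lower bound params_num}: a structure that can be flattened into a network on $\mathbb{R}^{m+d_y}$ inherits the $\Omega(\epsilon^{-(d_y+m)/r})$ parameter lower bound for $\mathcal{W}_{r,d_y+m}$.

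For the construction, fix a non-constant $\psi\in\mathcal{W}_{r,d_y+m}$ and, using $d(\mathcal{F}_{\text{DeepONet}}(\mathcal{B},\mathcal{T});\mathcal{W}_{r,d_y+m})\le\epsilon$, pick $\beta\in\mathcal{B}$ and $\tau\in\mathcal{T}$ with $\|\tau_0+\langle\beta(\cdot),\tau(\cdot)\rangle-\psi\|_\infty\le\epsilon$. Since $\|\beta\|_r^s\le l_1$, each component of $\beta$ lies, up to scaling, in $\mathcal{W}_{r,m}$, so Definition \ref{universal activation} produces a subnetwork $\tilde\beta$ on $[-1,1]^m$ with $O(\delta^{-m/r})$ parameters per component and $\|\tilde\beta-\beta\|_\infty\le\delta$. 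Because $\|\beta\|_\infty\le l_1$ and $\|\tau\|_\infty\le l_2$, the vector $(\tilde\beta(z),\tau(y))$ always lies in a fixed cube, so after rescaling Lemma \ref{lemma_inner product} gives, for any $t>0$, a subnetwork $\Pi$ with $O(p^{1+1/t}\epsilon^{-1/t})$ parameters and $\|\Pi-\pi_p\|_\infty\le\epsilon$ on that cube. Set $g(z,y):=\tau_0+\Pi(\tilde\beta(z),\tau(y))$: running $\tilde\beta$ on the first $m$ coordinates, $\tau$ on the last $d_y$, and then $\Pi$, with $\tau_0$ absorbed as an output bias, this is one fully connected network with activation $\sigma$. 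A triangle inequality — choosing $\delta$ small enough relative to $p$ and $l_2$ that the perturbation of $\beta$ inside the inner product stays $\le\epsilon$ — gives $\|g-\psi\|_\infty=O(\epsilon)$.

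Let $\mathcal{G}$ be the resulting fixed-architecture class (the universal architecture for $\tilde\beta$ in parallel with $\mathcal{T}$, composed with $\Pi$, with weights depending on $\psi$). Then $d(\mathcal{G};\mathcal{W}_{r,d_y+m})=O(\epsilon)$, and since $\sigma$ is piecewise $C^1$ with $\sigma'$ of bounded variation and no non-constant element of $\mathcal{W}_{r,d_y+m}$ belongs to any network class, Theorem \ref{theorem_lower bound params_num} forces the number of parameters of $\mathcal{G}$ to be $\Omega(\epsilon^{-(d_y+m)/r})$. On the other hand this count is $O(N_{\tilde\beta}+N_{\mathcal{T}}+N_\Pi)$ with $N_{\mathcal{T}}=o(\epsilon^{-(d_y+m)/r})$ by hypothesis and $N_{\tilde\beta},N_\Pi$ bounded by the expressions above. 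Comparing exponents and taking $t$ large, the $p$-dependent term $N_\Pi$ must dominate, so $p^{1+1/t}=\Omega(\epsilon^{-(d_y+m)/r+1/t})$, and choosing $t$ large enough to absorb $m$ into the exponent gives $p=\Omega(\epsilon^{-d_y/r})$.

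The step I expect to be the main obstacle is the parameter bookkeeping at the end. Keeping $\|g-\psi\|_\infty=O(\epsilon)$ forces $\tilde\beta$ to approximate $\beta$ with accuracy that shrinks as $p$ grows — the branch perturbation is amplified by up to $p$ summands of magnitude $l_2$ inside the inner product — so $N_{\tilde\beta}$ also grows with $p$, and one must verify that this $m$-dimensional branch cost, together with $N_\Pi$, cannot remain $o(\epsilon^{-(d_y+m)/r})$ unless $p=\Omega(\epsilon^{-d_y/r})$; in other words, that the surplus over the trunk's budget has to be paid in the $p$-dependent factor rather than in the branch factor (whose natural exponent only involves $m$). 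This is exactly where the hypothesis $N_{\mathcal{T}}=o(\epsilon^{-(d_y+m)/r})$ is indispensable: without it the trunk could buy accuracy with extra hidden units rather than with extra basis functions, and the conclusion about $p$ — as opposed to the total trunk parameter count treated in Theorem \ref{theorem_main} — would break down.
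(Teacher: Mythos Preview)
Your overall strategy --- flatten the DeepONet into a single network on $[-1,1]^{m+d_y}$ via Lemma~\ref{lemma_inner product} and then invoke Theorem~\ref{theorem_lower bound params_num} --- is exactly what the paper does. The gap is in the final bookkeeping, and it is precisely the issue you flag in your last paragraph.

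The paper argues by contradiction: it assumes $p_n\le\frac{1}{n}\epsilon_n^{-d_y/r}$ along a sequence, fixes the branch-approximation accuracy at $\delta=\epsilon_n$ (\emph{not} $\epsilon_n/p_n$), and then checks that each piece --- branch $O(p_n\epsilon_n^{-m/r})$, inner product $O(p_n^{1+1/t}\epsilon_n^{-1/t})$ for a specific $t$, and trunk $o(\epsilon_n^{-(d_y+m)/r})$ --- is $o(\epsilon_n^{-(d_y+m)/r})$, contradicting Theorem~\ref{theorem_lower bound params_num}. The key is that with $\delta=\epsilon_n$ the branch cost is only \emph{linear} in $p$; the paper obtains this via the norm convention announced just before the lemma, estimating $|\pi_{p_n}(\beta_n-f_n,\tau_n)|\le\|\beta_n-f_n\|_\infty\,\|\tau_n\|_\infty\le\epsilon_n l_2$ with no factor of $p_n$.

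Your version instead lets $\delta$ shrink with $p$, which forces $N_{\tilde\beta}=O\bigl(p\,(p/\epsilon)^{m/r}\bigr)=O(p^{1+m/r}\epsilon^{-m/r})$. But then the lower bound from Theorem~\ref{theorem_lower bound params_num} can be absorbed entirely by the branch term: with $p$ as small as $\epsilon^{-d_y/(r+m)}$ one already has $N_{\tilde\beta}=\Theta(\epsilon^{-(d_y+m)/r})$, so your argument can only conclude $p=\Omega(\epsilon^{-d_y/(r+m)})$, strictly weaker than the lemma. The attempt to rescue this by having ``the $p$-dependent term $N_\Pi$ dominate'' is unjustified --- $N_{\tilde\beta}$ is equally $p$-dependent --- and the phrase ``absorb $m$ into the exponent'' does not correspond to any valid manipulation of $p^{1+1/t}=\Omega(\epsilon^{-(d_y+m)/r+1/t})$ (for large $t$ that inequality pushes the exponent toward $(d_y+m)/r$, not $d_y/r$). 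The fix is to follow the paper: keep $\delta=\epsilon$ so that the branch cost stays $O(p\,\epsilon^{-m/r})$, and run the contradiction (or, equivalently, a clean two-case split on which of $N_{\tilde\beta}$, $N_\Pi$ carries the $\Omega(\epsilon^{-(d_y+m)/r})$ mass) from there.
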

\begin{proof}
To prove the above lemma by contradiction, we assume the opposite of the conclusion. Suppose that there is no constant $C$ that satisfies the inequality $p \ge C (\epsilon ^{-d_y/r})$ for $\forall \epsilon>0$. In other words, there exists a sequence of DeepONet which has $\left\{p_n \right\}_{n=1}^{\infty}$ as the number of basis with a sequence of error $\left\{\epsilon_n\right\}_{n=1}^{\infty}$ in $\mathbb{R}$ such that $\epsilon_{n} \rightarrow 0$, and satisfies
\begin{align}\label{assumption_small latent dimension}
    p_n \le \frac{1}{n} \epsilon_n ^{-d_y/r} \quad \left(\text{i.e.,} \quad p_n = o(\epsilon_n ^{-d_y/r}) \quad \text{with respect to } n\right),
\end{align}
and
\begin{align*}
    d(\mathcal{F}_{\text{DeepONet}}(\mathcal{B}_n, \mathcal{T}_n);\mathcal{W}_{r, d_y+m})\le\epsilon_{n},
\end{align*} where $\mathcal{B}_n$ and $\mathcal{T}_{n}$ denote the corresponding class sequence of branch net and trunk net respectively. Then, we can choose the sequence of branch net $\left\{\beta_n :\mathbb{R}^{m}\rightarrow \mathbb{R}^{p_n} \right\}_{n=1}^{\infty}$ and trunk net $\left\{\tau_n:\mathbb{R}^{d_y}\rightarrow\mathbb{R}^{p_n} \right\}_{n=1}^{\infty}$ satisfying 
\begin{align*}
    \| \mathcal{O}(\mathcal{E}(u), y) - \pi_{p_n} (\beta_n(\mathcal{E}(u)), \tau_n(y)) \|_{\infty}\leq 2\epsilon_n, \forall [\mathcal{E}(u), y]\in [-1, 1]^{d_y + m}, \mathcal{O}\in \mathcal{W}_{r, d_y+m}
\end{align*}
for the above sequence of DeepONet by the definition of $d(\mathcal{F}_{\text{DeepONet}}(\mathcal{B}_n, \mathcal{T}_n);\mathcal{W}_{r, d_y+m})$.

Now, we would like to construct neural network approximations $f_n$ for the branch net $\beta_n$.  By the assumption on the boundedness of $\mathcal{B}$, the $i$-th component $[\beta_n]_i$ of $\beta_n$ has a Sobolev norm bounded by $l_1$. In other words, $\|[\beta_n]_i/l_1\|_r^s\le 1$ and therefore, $[\beta_n]_i/l_1$ is contained in $W_{1, m}$. Since $\sigma$ is a universal activation function, we can choose a neural network approximation $[f_n]_i$ of $[\beta_n]_i$ such that the number of parameters is $O((\epsilon_n/l_1)^{-m/r})=O(\epsilon_n^{-m/r})$, and 
\begin{align}
    \nonumber \|[f_n]_i - [\beta_n]_i\|_{\infty}\le \epsilon_n/l_1.
\end{align}
Then, $f_n = (l_1  [f_n]_1, l_1  [f_n]_2, \cdots, l_1 [f_n]_{p_n})$ becomes neural network approximation of $\beta_n$ which has $O(p_n\epsilon_n^{-m/r})$ parameters within an error $\epsilon_n$.

Recall the target function corresponding to $m$ observation $\mathcal{E}(u)\in [-1, 1]^m$ by $\mathcal{O}(\mathcal{E}(u), \cdot):\mathbb{R}^{d_y}\rightarrow\mathbb{R}$ which is defined in Eq. (\ref{eq_O}). Then, for $\forall \mathcal{E}(u)$, we can observe the following inequalities: 
\begin{multline}\nonumber
    \|\mathcal{O}(\mathcal{E}(u), y)-\pi_{p_n}(f_n(\mathcal{E}(u)), \tau_n (y))\|_{\infty}\\
    \leq \|\mathcal{O}(\mathcal{E}(u), y)-\pi_{p_n}\left(\beta_n\left(\mathcal{E}(u)\right), \tau_n(y)\right) \|_{\infty}+\|\pi_{p_n}\left(\beta_n\left(\mathcal{E}\left(u\right)\right) -f_n(\mathcal{E}(u)\right) , \tau_n(y))\|_{\infty}\\
    \leq 2\epsilon_n + \epsilon_n  \|\tau_n (y)\|_{\infty}
    \leq \epsilon_n(2+l_{2}),
\end{multline}
by the assumption on the boundedness of $\mathcal{T}$. Now we would like to consider the sequence of neural network which is an approximation of inner product between $p_n-$dimensional vector in $[-1, 1]^{p_n}$. Note the following inequality 
\begin{align*}
\|f_n(\mathcal{E}(u))\|_{\infty}&\le \|f_n(\mathcal{E}(u))-\beta_n(\mathcal{E}(u))\|_{\infty}+\|\beta_n(\mathcal{E}(u))\|_{\infty}\\&\le\epsilon_n+ \|\beta_n(\mathcal{E}(u))\|_{r}^{s}\\&\le \epsilon_n +  l_1 \le 2l_1, 
\end{align*}with  $\|\tau_n\|_{\infty}\le l_2$ for large $n$. It implies that $f_n(\mathcal{E}(u))/2l_1$ and $\tau_n (x)/l_2$ lie in $[-1, 1]^{p_n}$. By Lemma \ref{lemma_inner product}, there exists a class of neural network $\mathcal{H}_n$ such that the number of parameters is $O(p_n^{1+1/2d_yr} \epsilon_n ^{-1/2d_yr})$ and,
\begin{align}
    \nonumber \inf_{h \in \mathcal{H}_{n}} \|h - \pi_{p_n}  \|_{\infty} \le \epsilon_n
\end{align}
where $\pi_{p_n}$ is the inner product corresponding to $p_n-$dimensional vector. Choose a neural network $h_n:[-1, 1]^{2p_n}\rightarrow \mathbb{R}$ such that $\|h_n-\pi_{p_n}\|_{\infty}\le 2\epsilon_{n}$. Then, by the triangular inequality,
\begin{multline}\label{inequality_inner_product}
    \|\mathcal{O}(\mathcal{E}(u), y)-2l_1 l_2 h_n(f_n (\mathcal{E}(u)) /2l_1, \tau_n (y) /l_2)\|_{\infty}\\
    \leq \|\mathcal{O}(\mathcal{E}(u), y) - \pi_{p_n} (f_n(\mathcal{E}(u)), \tau_n (y) )\|_{\infty}\\
    +2l_1 l_2  \|\pi_{p_n} (f_n(\mathcal{E}(u))/2l_1 , \tau_n (y) /l_2) -h_n(f_n(\mathcal{E}(u))/2l_1 , \tau_n(y)/l_2)\|_{\infty} \\
    \leq \epsilon_n(2+l_2)+2l_1  l_2(2\epsilon_n) =\epsilon_n (2+l_2+4l_1 l_2).
\end{multline}
Finally, we compute the number of parameters which is required to implement the function $2l_1 l_2   h_n(f_n(\mathcal{E}(u))/2l_1, \tau_n(y)/l_2)$. The only part that needs further consideration is scalar multiplication. Since we need one weight to multiply a constant with one real value, three scalar multiplications 
\begin{align*}
&h_n(f_n(\mathcal{E}(u))/2l_1, \tau_n(y)/l_2)\mapsto 2l_1 l_2  h_n(f_n(\mathcal{E}(u))/2l_1, \tau_n(y)/l_2), \\&  f_n(\mathcal{E}(u))\mapsto f_n(\mathcal{E}(u))/2l_1, \quad\text{and}\quad \tau_n(x) \mapsto \tau_n(y)/l_2,
\end{align*} require $1, p_n, p_n$-parameters respectively. Combining all the previous results with the size of trunk net, the total number of parameters is obtained in the form of 
\begin{align*}
O(1+2 p_n +p_n^{1+1/2d_yr} \epsilon_n ^{-1/2d_yr}+p_n \epsilon_n ^{-m/r})+o(\epsilon_n^{-(d_y+m)/r})=o(\epsilon_n^{-(d_y+m)/r}), 
\end{align*}
since the initial assumption (\ref{assumption_small latent dimension}) on the number of basis gives the following inequality.
\begin{align*}
    p_n^{1+1/2d_yr} \epsilon_n ^{-1/2d_yr}+p_n \epsilon_n ^{-m/r} &\leq p_n (p_n ^{1/2d_yr}  \epsilon_n ^{-1/2d_yr} + \epsilon_n ^{-m/r})
    \\&\le\frac{1}{n}  \epsilon_n^{-d_y/r} (\epsilon_n ^{-1/2r^2-1/2d_y r} + \epsilon_n ^{-m/r})
    \\&\le\frac{1}{n}  \epsilon_n^{-d_y/r} 2\epsilon_n^{-m/r}=\frac{2}{n}  \epsilon_n^{-(d_y+m)/r}.
\end{align*}
On the one hand, the sequence of function $\left\{2l_1  l_2  h_n(f_n(\mathcal{E}(u))/2l_1, \tau_n(y)/l_2)\right\}_{n=1}^{\infty}$ is an sequence of approximation for $\mathcal{O}(\mathcal{E}(u), y)$ within a corresponding sequence of error $\left\{\epsilon_n(2+l_2 +4l_1 l_2)\right\}_{n=1}^{\infty}$. Denote the sequence of the class of neural networks corresponding to the sequence of the function $\left\{2l_1  l_2  h_n(f_n(\mathcal{E}(u))/2l_1,\\ \tau_n(y)/l_2)\right\}_{n=1}^{\infty}$ by $\left\{\mathcal{F}_n \right\}_{n=1}^{\infty}$. By the assumption, Theorem \ref{theorem_lower bound params_num} implies the number of parameters in $\left\{\mathcal{F}_n \right\}_{n=1}^{\infty}$ is $\Omega((\epsilon_n (2+l_2+4l_1 l_2)) ^{-(d_y+m)/r})=\Omega(\epsilon_n ^{-(d_y+m)/r})$.  
Therefore, the initial assumption (\ref{assumption_small latent dimension}) would result in a contradiction so the desired property is valid.
\end{proof}
Note that the assumption on the boundedness of the trunk net could be valid if we use the bounded universal activation function $\sigma$. Using the above results, we can prove our main theorem, Theorem \ref{theorem_main}. 
\begin{proof}[Proof of Theorem \ref{theorem_main}]
    Denote the number of parameters in $\mathcal{T}$ by $N_{\mathcal{T}}$. Suppose that there is no constant $C$ satisfies the inequality $N_{\mathcal{T}}\geq C \epsilon^{-d_y/r}, \forall\epsilon>0$. That is, there exists a sequence of DeepONet with the corresponding sequence of trunk net class $\left\{\mathcal{T}_{n}\right\}_{n=1}^{\infty}$ and sequence of error $\left\{\epsilon_{n}\right\}_{n=1}^{\infty}$ such that $\epsilon_n \rightarrow 0$, and it satisfies
    \begin{align*}
        N_{\mathcal{T}_n}<\frac{1}{n}  \epsilon_{n} ^{-d_y/r} \left(\text{i.e.,} N_{\mathcal{T}_{n}}=o(\epsilon_n ^{-d_y/r}) \text{ with respect to } n\right).
    \end{align*}
    Note that the above implies $N_{\mathcal{T}_n}=o(\epsilon_n ^{-(d_y+m)/r})$. On the one hand, the following inequality holds where $\mathcal{B}_n$ denotes the corresponding class sequence of branch net.
    \begin{align*}
        d(\mathcal{F}_{\text{DeepONet}}(\mathcal{B}_n, \mathcal{T}_n);\mathcal{W}_{r, d_y+m})\le\epsilon_{n}.
    \end{align*} Since $\sigma$ is bounded, $\mathcal{T}_{n}$ consists of bounded functions with respect to the supremum norm. Therefore, if we apply the Lemma \ref{lemma_param_num_trunk} with respect to $n$, the number of basis $p_{n}$ should be $\Omega(\epsilon_{n}^{-d_y/r})$. Since $p_{n}$ is also the number of output dimensions for the class of trunk net $\mathcal{T}_n$, the number of parameters in $\mathcal{T}_n$ should be larger than $p_n =\Omega(\epsilon_n ^{-d_y/r})$. This leads to a contradiction.
\end{proof}

Finally, we present a lower bound on the total number of parameters of DeepONet, considering the size of the branch net. Keep in mind that the proof of this theorem can be applied to other variants of DeepONet.

\begin{theorem}(Total Complexity of DeepONet)\label{Thoerem_DeepONet_Total}
Let $\sigma:\mathbb{R}\rightarrow \mathbb{R}$ be a universal activation function in $C^r (\mathbb{R})$ such that $\sigma$ and $\sigma'$ are bounded. Suppose that the class of branch net $\mathcal{B}$ has a bounded Sobolev norm (i.e., $\|\beta\|_{r}^{s}\le l_1, \forall \beta\in \mathcal{B}$). If any non-constant $\psi\in \mathcal{W}_{r, n}$ does not belong to any class of neural network, then the number of parameters in DeepONet is $\Omega (\epsilon ^{-(d_y+m)/R})$ for any $R>r$ when $d(\mathcal{F}_{\text{DeepONet}}(\mathcal{B}, \mathcal{T}); \mathcal{W}_{r, d_y +m})\le \epsilon$.
\end{theorem}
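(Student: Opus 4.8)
The plan is to collapse the entire DeepONet into a single fully connected network of comparable size and then apply Theorem~\ref{theorem_lower bound params_num} to that network. The key point — and the difference from the proof of Theorem~\ref{theorem_main} — is that I will \emph{not} replace the branch net by a universal approximant (the step used in Lemma~\ref{lemma_param_num_trunk}, which throws away all information about the branch net's size); instead I keep the branch net and the trunk net untouched inside the single network and only splice out the bilinear inner-product layer, so that $N_{\mathcal{B}}+N_{\mathcal{T}}$ appears directly in the resulting parameter count.

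Here is the set-up I would use. Take any DeepONet in $\mathcal{F}_{\text{DeepONet}}(\mathcal{B},\mathcal{T})$ with $p$ basis functions and $N$ total parameters, and note $p\le N$ since the output layers of the branch and trunk nets already carry at least $p$ weights. From $\|\beta\|_r^s\le l_1$ one gets $\|\beta\|_\infty\le l_1$, and from $\sigma$ bounded one gets $\|\tau\|_\infty\le l_2$ for some constant $l_2$. For a parameter $t>0$ to be fixed later, Lemma~\ref{lemma_inner product} supplies a network $h$ with $O(p^{1+1/t}\epsilon^{-1/t})$ parameters and $\|h-\pi_p\|_\infty\le\epsilon$ on $[-1,1]^{2p}$. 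I then form the network class $\mathcal{F}'$ whose members are $[\mathcal{E}(u),y]\mapsto l_1 l_2\,h\!\big(\beta(\mathcal{E}(u))/l_1,\ \tau(y)/l_2\big)+\tau_0$ with $\beta\in\mathcal{B}$, $\tau\in\mathcal{T}$: the two sub-networks act in parallel on disjoint coordinate blocks, and the rescalings and the final bias add only $O(p)$ parameters, so $\mathcal{F}'$ is an ordinary feedforward network with $O(N+p^{1+1/t}\epsilon^{-1/t})=O(N^{1+1/t}\epsilon^{-1/t})$ parameters. A triangle-inequality estimate identical to the one in Lemma~\ref{lemma_param_num_trunk} — bounding the error of $\mathcal{F}'$ by $\|\mathcal{O}-\langle\beta,\tau\rangle-\tau_0\|_\infty + l_1 l_2\|\pi_p-h\|_\infty$ — shows $d(\mathcal{F}';\mathcal{W}_{r,d_y+m})\le (1+l_1 l_2)\epsilon$; crucially $h$, $l_1$, $l_2$ are the same for every target $\mathcal{O}$ and only $\beta,\tau,\tau_0$ vary, so this is a uniform bound over all of $\mathcal{W}_{r,d_y+m}$.

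Next I would apply Theorem~\ref{theorem_lower bound params_num} to $\mathcal{F}'$ — legitimate because $\sigma$ is piecewise $C^1$ with $\sigma'$ of bounded variation (as in the proof of Theorem~\ref{theorem_main}) and, by hypothesis, no non-constant element of $\mathcal{W}_{r,d_y+m}$ belongs to any network class, in particular not to $\mathcal{F}'$ — which forces $\mathcal{F}'$ to have $\Omega\big(((1+l_1 l_2)\epsilon)^{-(d_y+m)/r}\big)=\Omega(\epsilon^{-(d_y+m)/r})$ parameters. Combining with the size estimate gives $N^{1+1/t}\epsilon^{-1/t}=\Omega(\epsilon^{-(d_y+m)/r})$, i.e. $N=\Omega\big(\epsilon^{-((d_y+m)t/r-1)/(t+1)}\big)$. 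Since the exponent $\frac{(d_y+m)t/r-1}{t+1}$ increases to $\frac{d_y+m}{r}$ as $t\to\infty$, for any prescribed $R>r$ I can pick $t$ large enough (any $t>\frac{r((d_y+m)+R)}{(d_y+m)(R-r)}$ works) that this exponent exceeds $\frac{d_y+m}{R}$, which yields $N=\Omega(\epsilon^{-(d_y+m)/R})$, as claimed.

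I expect the only real difficulty to be bookkeeping: making sure $\mathcal{F}'$ is a single fixed architecture uniformly approximating the whole Sobolev ball (so Theorem~\ref{theorem_lower bound params_num} applies verbatim), and keeping track that retaining the branch and trunk nets rather than replacing them is precisely what puts $N_{\mathcal{B}}+N_{\mathcal{T}}$ into the count — at the unavoidable cost of the polynomial factor $p^{1+1/t}\epsilon^{-1/t}$ from Lemma~\ref{lemma_inner product}, which is exactly what degrades $r$ to an arbitrary $R>r$. The same argument should transfer to the Shift-/Flex-/NOMAD variants, since in each of those the only ingredient that is not already a plain feedforward network is the final combination layer, which Lemma~\ref{lemma_inner product} again handles cheaply.
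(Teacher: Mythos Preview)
Your proposal is correct and follows essentially the same route as the paper: keep the branch and trunk nets intact, replace only the bilinear inner-product layer via Lemma~\ref{lemma_inner product}, apply Theorem~\ref{theorem_lower bound params_num} to the resulting single network, and exploit the freedom in $t$ to trade the extra $p^{1+1/t}\epsilon^{-1/t}$ factor for the degradation from $r$ to any $R>r$. The only cosmetic difference is that the paper phrases the last step as a case split (assuming $N=o(\epsilon^{-(d_y+m)/r})$ and deriving $p=\Omega(\epsilon^{-(d_y+m)/R})$, hence $N\ge p$), whereas you absorb $p\le N$ up front and solve the resulting inequality for $N$ directly; the algebra and the final exponent are the same.
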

\begin{proof}
    For a positive $\epsilon<l_1$, suppose that there exists a class of branch net $\mathcal{B}$ and trunk net $\mathcal{T}$ such that $d(\mathcal{F}_{\text{DeepONet}}(\mathcal{B}, \mathcal{T}); \mathcal{W}_{r, d_y +m})\le \epsilon$. By the boundedness of $\sigma$, there exists a constant $l_2$ which is the upper bound on the supremum norm $\|\cdot \|_{\infty}$ of the trunk net class $\mathcal{T}$. Let us denote the number of parameters in $\mathcal{F}_{\text{DeepONet}}$ by $N_{\mathcal{F}_{\text{DeepONet}}}$. Using the Lemma \ref{lemma_inner product} to replace DeepONet's inner products with neural networks as in the inequality (\ref{inequality_inner_product}), we can construct a class of neural network $\mathcal{F}$ such that the number of parameters $\mathcal{F}$ is $O(N_{\mathcal{F}_{\text{DeepONet}}}+p^{1+1/t}\epsilon^{-1/t})$ and,
    \begin{align}
        \nonumber d(\mathcal{F};\mathcal{W}_{r, d_y+m})\le (1+ l_1  l_2)\epsilon.
    \end{align}
    Suppose that $N_{\mathcal{F}_{\text{DeepONet}}}=o(\epsilon^{-(d_y+m)/r})$. Then, by Theorem \ref{theorem_lower bound params_num}, $p^{1+1/t} \epsilon ^{-1/t}$ should be $\Omega(\epsilon^{-(d_y+m)/r})$. Since $t$ can be arbitrary large, the number of basis $p$ should be $\Omega(\epsilon^{-(d_y+m)/R})$ for any $R>r$.
\end{proof}

\subsection{Complexity analysis on variants of DeepONet.}\label{appendix:variants_proof}
We would like to verify that variant models of DeepONet require numerous units in the first hidden layer of the target network. Now we denote the class of Pre-net in Shift-DeepONet and flexDeepONet by $\mathcal{P}$. The class of Shift-DeepONet and flexDeepONet will be written as $\mathcal{F}_{\text{shift-DeepONet}}(\mathcal{P}, \mathcal{B}, \mathcal{T})$ and $\mathcal{F}_{\text{flexDeepONet}}(\mathcal{P}, \mathcal{B},  \mathcal{T})$ respectively. The structure of Shift-DeepONet can be summarized as follows. Denote the width of the first hidden layer of the target network by $w$. We define the pre-net as $\rho=[\rho_1,\rho_2]:\mathbb{R}^m\rightarrow\mathbb{R}^{w\times ({d_y+1})}$ where $\rho_1:\mathbb{R}^m\rightarrow\mathbb{R}^{w\times d_y}$ and  $\rho_2:\mathbb{R}^m\rightarrow\mathbb{R}^{w}$, the branch net as $\beta:\mathbb{R}^{m}\rightarrow\mathbb{R}^{p}$, and the trunk net as $\tau:\mathbb{R}^{w}\rightarrow\mathbb{R}^{p}$. The Shift-DeepONet $f_{\text{Shift-DeepONet}}(\rho, \beta, \tau)$ is defined as 
\begin{align*}
    f_{\text{Shift-DeepONet}}(\rho, \beta, \tau) (\mathcal{E}(u), y):= \pi_p(\beta(\mathcal{E}(u)), \tau(\Phi(\rho_1(\mathcal{E}(u)),y)+\rho_2(\mathcal{E}(u))))  
\end{align*}
where $\Phi$ is defined in Eq. (\ref{def_Phi}). 

We claim that it does not improve performance for the branch net to additionally output the weights on the first layer of a target network. The following lemma shows that the procedure can be replaced by a small neural network structure.
\begin{lemma} \label{lemma_hyper_first_layer}
    Consider a function $\Phi:\mathbb{R}^{d_y (w+1)}\rightarrow\mathbb{R}^{w}$ which is defined below.
    \begin{multline}\label{def_Phi}
        \Phi(x_1, \cdots, x_{d_y}, \cdots , x_{(d_y -1)w +1} \cdots , x_{d_yw }, y_1, \cdots, y_{d_y})\\
        :=\left(\sum_{i=1}^{d_y}x_i y_i , \cdots, \sum_{i=1}^{d_y}x_{(d_y-1)w+i} y_i\right).
    \end{multline}
    For any arbitrary positive $t$,  there exists a class of neural network $\mathcal{F}$ with universal activation $\sigma:\mathbb{R}\rightarrow\mathbb{R}$ such that the number of parameters of $\mathcal{F}$ is $O(w d_y^{1+1/t} \epsilon^{-1/t})$ with $\inf_{f\in \mathcal{F}} \|f-\Phi\|_{\infty}\le \epsilon$.
\end{lemma}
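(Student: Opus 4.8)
The plan is to deduce Lemma \ref{lemma_hyper_first_layer} directly from Lemma \ref{lemma_inner product}, because $\Phi$ is, up to a permutation of its input coordinates, nothing but $w$ independent copies of the $d_y$-dimensional inner-product map $\pi_{d_y}$ that all share the same second argument $(y_1,\dots,y_{d_y})$. Concretely, group the $d_y w$ coordinates $x_\bullet$ into $w$ blocks $X^{(1)},\dots,X^{(w)}\in[-1,1]^{d_y}$ so that the $j$-th component of $\Phi$ equals $\pi_{d_y}(X^{(j)},y)$ with $y=(y_1,\dots,y_{d_y})\in[-1,1]^{d_y}$; equivalently, $\Phi$ sends a $w\times d_y$ matrix together with a $d_y$-vector to their matrix-vector product. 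As in the rest of the paper I take the domain of $\Phi$ to be $[-1,1]^{d_y(w+1)}$ and measure the output error in the $L_\infty$ norm on $\mathbb{R}^w$; in the Shift-/flex-DeepONet application the $x$-arguments are themselves network outputs, but one rescales them into $[-1,1]$ exactly as in the proof of Lemma \ref{lemma_param_num_trunk}.

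First I would invoke Lemma \ref{lemma_inner product} with $p=d_y$ and the given $t>0$: for every $\epsilon>0$ it yields a class of neural networks with the universal activation $\sigma$ containing a network $g$ with $\|g-\pi_{d_y}\|_\infty\le\epsilon$ and with $O(d_y^{1+1/t}\epsilon^{-1/t})$ parameters. Next I would place $w$ such networks $g_1,\dots,g_w$ side by side to form a single feedforward network $f:[-1,1]^{d_y(w+1)}\to\mathbb{R}^w$: its first weight layer is wired so that $g_j$ reads exactly the block $X^{(j)}$ together with the shared coordinates $y_1,\dots,y_{d_y}$, the hidden layers are block-diagonal, and the $w$ scalar outputs are concatenated. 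The class $\mathcal{F}$ we build is taken to have precisely this block architecture, paralleling the construction in the proof of Lemma \ref{lemma_inner product}. Since the $L_\infty$ norm on the $w$-dimensional output is the maximum over components, $\|f-\Phi\|_\infty=\max_{1\le j\le w}\|g_j-\pi_{d_y}\|_\infty\le\epsilon$; note that because the outputs are concatenated rather than summed, each $g_j$ only needs accuracy $\epsilon$, not $\epsilon/w$.

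It remains to count parameters. Each $g_j$ touches only $2d_y$ of the $d_y(w+1)$ input coordinates, so incorporating it into $f$ costs exactly its own $O(d_y^{1+1/t}\epsilon^{-1/t})$ parameters (the shared $y$-wires are re-used, not duplicated) plus $O(1)$ routing weights for its output; summing over $j=1,\dots,w$ gives $O(w\,d_y^{1+1/t}\epsilon^{-1/t})$ parameters, which is the claimed bound, and the case of non-integer $t$ is handled exactly as at the end of the proof of Lemma \ref{lemma_inner product}. I do not expect a genuine obstacle here: essentially all of the work is already contained in Lemma \ref{lemma_inner product}, and the only point that truly needs care is the bookkeeping of the $w$-fold parallel composition with a shared input --- verifying that it is realizable as one network in the class $\mathcal{F}$ without any hidden $w^2$ or $d_y^2$ blow-up in the parameter count --- which becomes transparent once the block structure of the weight matrices is written out.
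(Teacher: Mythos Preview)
Your proposal is correct and follows essentially the same approach as the paper: invoke Lemma~\ref{lemma_inner product} with $p=d_y$ to get $w$ inner-product approximators of size $O(d_y^{1+1/t}\epsilon^{-1/t})$ each, then combine them in parallel to form the $w$-output network. The paper's own proof is in fact a two-sentence version of exactly this argument, so your additional bookkeeping about the block structure and the $L_\infty$ norm over components is just spelling out details the paper leaves implicit.
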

\begin{proof}
    Using the Lemma \ref{lemma_inner product}, we can construct a sequence of neural network $\left\{f_{i} \right\}_{i=1}^{w}$ which is an $\epsilon$-approximation of the inner product with $O(d_{y}^{1+{1/t}}\epsilon^{-1/t})$ parameters. If we combine all of the $w$ approximations, we get the desired neural network.
\end{proof}

Now we present the lower bound on the number of parameters for Shift-DeepONet. We derive the following theorem with an additional assumption that the class of trunk net is Lipschitz continuous. The function $\tau: \mathbb{R}^{d_y}\rightarrow \mathbb{R}^{p}$ is called Lipschitz continuous if there exists a constant $C$ such that 
\begin{equation*}
    \|\tau(y_1)-\tau(y_2)\|_{1}\le  C\|y_1 - y_2\|_{1}.
\end{equation*}
For the neural network $f$, the upper bound of the Lipschitz constant for $f$ could be obtained as $L^{k-1} \Pi_{i=1}^{k} \|W^i\|_1 $, where $L$ is the Lipschitz constant of $\sigma$ and the norm $\|\cdot\|_1$ denotes the matrix norm induced by vector $1$-norms. We can impose constraints on the upper bound of the weights, which consequently enforce affine transformation $W^i$ to be bounded with respect to the $L^1$ norm. Therefore, we can guarantee the Lipschitz continuity of the entire neural network in this way.

We would like to remark on the validity of the weight assumptions in the theorem since the bounded assumptions of the weight may be a possible reason for increasing the number of parameters. However, the definition of Sobolev space forces all elements to have the supremum norm $\|\cdot\|_{\infty}$ less than 1. It may be somewhat inefficient to insist on large weights for approximating functions with a limited range. 

\begin{theorem}\label{Theorem_Shift-DeepONet}
Let $\sigma:\mathbb{R}\rightarrow \mathbb{R}$ be a universal activation function in $C^r (\mathbb{R})$ such that $\sigma$ and $\sigma'$ are bounded. Suppose that the class of branch net $\mathcal{B}$ and pre-net $\mathcal{P}$ has a bounded Sobolev norm (i.e., $\|\beta\|_{r}^{s}\le l_1, \forall \beta\in \mathcal{B}$, and $\|\rho\|_{r}^{s}\le l_3, \forall \rho \in \mathcal{P}$) and any neural network in the class of trunk net $\mathcal{T}$ is Lipschitz continuous with constant $l_2$. If any non-constant $\psi\in \mathcal{W}_{r, n}$ does not belong to any class of neural network, then the number of parameters in $\mathcal{T}$ is $\Omega (\epsilon ^{-d_y/r})$ when $d(\mathcal{F}_{\text{shift-DeepONet}}(\mathcal{P}, \mathcal{B}, \mathcal{T}); \mathcal{W}_{r, d_y +m})\le \epsilon$. 
\end{theorem}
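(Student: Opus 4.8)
The plan is to imitate the proof of Theorem~\ref{theorem_main} (through its key Lemma~\ref{lemma_param_num_trunk}): argue by contradiction and collapse the entire Shift-DeepONet into one fully connected network on $\mathbb{R}^{d_y+m}$ whose parameter count violates Theorem~\ref{theorem_lower bound params_num}. Suppose the conclusion fails, so there is a sequence of Shift-DeepONets with errors $\epsilon_n\to 0$ and trunk-net classes $\mathcal{T}_n$ with parameter count $N_{\mathcal{T}_n}=o(\epsilon_n^{-d_y/r})$ while $d(\mathcal{F}_{\text{shift-DeepONet}}(\mathcal{P}_n,\mathcal{B}_n,\mathcal{T}_n);\mathcal{W}_{r,d_y+m})\le\epsilon_n$. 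Since the number of basis $p_n$ is the output dimension and the target network's first-hidden-layer width $w_n$ is the input dimension of $\tau_n$, both are at most $N_{\mathcal{T}_n}$, hence $p_n,w_n=o(\epsilon_n^{-d_y/r})$ and a fortiori $N_{\mathcal{T}_n}=o(\epsilon_n^{-(d_y+m)/r})$. For each $n$ fix a pre-net $\rho_n=[\rho_{1,n},\rho_{2,n}]$, branch net $\beta_n$ and trunk net $\tau_n$ realizing this approximation up to a factor $2$.

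The next step is to replace every component of the Shift-DeepONet \emph{except} the trunk net by a small network. (i) As in Lemma~\ref{lemma_param_num_trunk}, the bounded Sobolev norm of $\mathcal{B}$ and universality of $\sigma$ give $\tilde\beta_n$ with $O(p_n\epsilon_n^{-m/r})$ parameters, $\|\tilde\beta_n-\beta_n\|_\infty\le\epsilon_n$, and $\|\tilde\beta_n\|_\infty\le 2l_1$ for large $n$. (ii) The same argument applied componentwise to $\rho_n$ — whose $w_n(d_y+1)$ outputs each have Sobolev norm $\le l_3$ — gives $\tilde\rho_n$ with $O(w_nd_y\,\epsilon_n^{-m/r})$ parameters, $\|\tilde\rho_n-\rho_n\|_\infty\le\epsilon_n$, $\|\tilde\rho_n\|_\infty\le 2l_3$. (iii) Lemma~\ref{lemma_hyper_first_layer}, after rescaling the weight part of its argument by $2l_3$ so everything lies in $[-1,1]$, produces $\widetilde\Phi$ with $O(w_nd_y^{1+1/t}\epsilon_n^{-1/t})$ parameters approximating the map $\Phi$ of Eq.~(\ref{def_Phi}) to accuracy $\epsilon_n$, where $t>0$ is a free exponent fixed later. (iv) Lemma~\ref{lemma_inner product}, after rescaling $\tilde\beta_n$ by $2l_1$ and $\tau_n(\cdot)$ by a uniform bound $l_2'$ on $\|\tau_n\|_\infty$ over a fixed compact box containing the (possibly perturbed) first-layer inputs $2l_3\widetilde\Phi(\cdot)+\tilde\rho_{2,n}(\cdot)$, yields $\widetilde\pi$ with $O(p_n^{1+1/t}\epsilon_n^{-1/t})$ parameters approximating $\pi_{p_n}$ to accuracy $\epsilon_n$. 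The bound $l_2'$ is available because, as the discussion preceding the theorem explains, the weight constraints that produce the Lipschitz bound $l_2$ on $\mathcal{T}$ also make each $\tau_n$ bounded on that box (which is compact thanks to $\|\rho_n\|_\infty\le l_3$ and $y\in[-1,1]^{d_y}$).

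Assembling, one obtains a single feed-forward network $g_n$ on $\mathbb{R}^{d_y+m}$, namely
\[
g_n(\mathcal{E}(u),y):=2l_1l_2'\,\widetilde\pi\Bigl(\tfrac{\tilde\beta_n(\mathcal{E}(u))}{2l_1},\;\tfrac{1}{l_2'}\,\tau_n\bigl(2l_3\,\widetilde\Phi(\tfrac{\tilde\rho_{1,n}(\mathcal{E}(u))}{2l_3},y)+\tilde\rho_{2,n}(\mathcal{E}(u))\bigr)\Bigr),
\]
the extra scalar multiplications costing $O(p_n+w_n)$ further weights. The error analysis pushes the four local errors through the composition: bilinearity of $\Phi$ and of $\pi_{p_n}$ handles the $\widetilde\Phi$, $\tilde\rho_n$, $\tilde\beta_n$ and $\widetilde\pi$ errors, while — and this is the one genuinely new point relative to the DeepONet proof, where the trunk net receives the raw query $y$ — the Lipschitz constant $l_2$ of $\tau_n$ converts the $O(\epsilon_n)$ error in its perturbed argument into an $O(\epsilon_n)$ error at its output; combining gives $\|g_n-\mathcal{O}\|_\infty\le C\epsilon_n$ with $C=C(l_1,l_2,l_2',l_3,d_y)$. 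Counting, $g_n$ uses $N_{\mathcal{T}_n}+O\bigl(p_n\epsilon_n^{-m/r}+w_nd_y\epsilon_n^{-m/r}+w_nd_y^{1+1/t}\epsilon_n^{-1/t}+p_n^{1+1/t}\epsilon_n^{-1/t}\bigr)$ parameters; using $N_{\mathcal{T}_n},p_n,w_n=o(\epsilon_n^{-d_y/r})$, the crude bound $m\ge 1$, and $t$ chosen large enough that the $\epsilon_n^{-1/t}$ and $p_n^{1/t},w_n^{1/t}$ factors are dominated by $\epsilon_n^{-m/r}$ (the same estimate that closes Lemma~\ref{lemma_param_num_trunk}), this total is $o(\epsilon_n^{-(d_y+m)/r})$. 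Since $\{g_n\}$ $C\epsilon_n$-approximates $\mathcal{O}\in\mathcal{W}_{r,d_y+m}$, Theorem~\ref{theorem_lower bound params_num} forces its parameter count to be $\Omega(\epsilon_n^{-(d_y+m)/r})$, a contradiction; hence $N_{\mathcal{T}}=\Omega(\epsilon^{-d_y/r})$.

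The step I expect to be the main obstacle is the error bookkeeping in the assembly: tracking how the pre-net and $\Phi$ approximation errors compound while passing through the Lipschitz trunk net and then the inner product, keeping all rescalings consistent, and checking that the vector-norm conversion constants — which drag in $p_n$ and $w_n$ — can be absorbed without lifting any term of the parameter count to $\epsilon_n^{-(d_y+m)/r}$. This is exactly the place where the Lipschitz hypothesis on $\mathcal{T}$ and the bound $\|\rho\|_r^s\le l_3$ (which confines the target network's first-layer input to a fixed compact box) are used in an essential way; the rest is a routine transcription of the DeepONet complexity argument.
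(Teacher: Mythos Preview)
Your proposal is correct and follows essentially the same route as the paper's proof: assume a contradicting sequence, approximate the pre-net, the map $\Phi$ (via Lemma~\ref{lemma_hyper_first_layer}), the branch net, and the inner product (via Lemma~\ref{lemma_inner product}) by small networks, then invoke Theorem~\ref{theorem_lower bound params_num} on the resulting single network; the paper likewise singles out the Lipschitz hypothesis on $\mathcal{T}$ as the one new ingredient needed to control the error introduced at the trunk net's input. Your write-up is in fact considerably more explicit than the paper's own sketch, which omits the error bookkeeping and parameter counting details you supply.
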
 

\begin{proof} 
    Denote the number of parameters in $\mathcal{T}$ by $N_{\mathcal{T}}$. Suppose that there exists a sequence of pre-net $\left\{\rho_{n}\right\}_{n=1}^{\infty}$, branch net $\left\{\beta_n\right\}_{n=1}^{\infty}$ and trunk net $\left\{\tau_n \right\}_{n=1}^{\infty}$ with the corresponding sequence of error $\left\{\epsilon_{n}\right\}_{n=1}^{\infty}$ such that $\epsilon_n \rightarrow 0$ and, 
    \begin{align}
        \nonumber N_{\mathcal{T}} = o(\epsilon _ n ^{-d_y /r}),\quad \text{and} \quad \sup_{\psi \in \mathcal{W}_{r, d_y +m}}\|f_{\text{shift-DeepONet}}(\rho_n, {\beta_n}, \tau_n) - \psi\|_{\infty}\le \epsilon_n.
    \end{align}
    The proof can be divided into three parts. Firstly, we come up with a neural network approximation $\rho^{NN}_n $ of $\rho_{n}$ of which size is $O(w\cdot\epsilon_n ^{-m/r})$ within an error $\epsilon_n$. Next, construct a neural network approximation of $\Phi$ using the Lemma \ref{lemma_hyper_first_layer}. Finally, the inner product $\pi_{p_n} (\beta_n, \tau_n )$ is replaced with a neural network as in (\ref{inequality_inner_product}) of Lemma \ref{lemma_param_num_trunk}. 

    Since all techniques such as triangular inequality are consistent with the previous discussion, we will briefly explain why additional Lipschitz continuity is required for the trunk network, and omit the details. Approximating the Pre-Net of Shift DeepOnet, which is not in DeepOnet, inevitably results in an error in the input of the trunk net. We are reluctant to allow this error to change the output of the trunk net significantly. In this situation, the Lipschitz continuity provides the desired result.  
\end{proof}

For $d_y=1$, the additional rotation is only multiplying by $1$ or $-1$. Since the weight and bias of the first layer alone can cover the scalar multiplication, flexDeepONet has the same properties as Shift-DeepONet in the above theorem.
\begin{theorem}\label{Theorem_flexDeepONet}
Consider the case $d_{y}=1$. Let $\sigma:\mathbb{R}\rightarrow \mathbb{R}$ be a universal activation function in $C^r (\mathbb{R})$ such that $\sigma$ and $\sigma'$ are bounded. Suppose that the class of branch net $\mathcal{B}$ and pre-net $\mathcal{P}$ has a bounded Sobolev norm(i.e., $\|\beta\|_{r}^{s}\le l_1, \forall \beta\in \mathcal{B}$, and $\|\rho\|_{r}^{s}\le l_3, \forall \rho \in \mathcal{P}$), and any neural network in the class of trunk net $\mathcal{T}$ is Lipschitz continuous with constant $l_2$. If any non-constant $\psi\in \mathcal{W}_{r, n}$ does not belong to any class of neural network, then the number of parameters in $\mathcal{T}$ is $\Omega (\epsilon ^{-d_y/r})$ when $d(\mathcal{F}_{\text{flexDeepONet}}(\mathcal{B}, \mathcal{T}); \mathcal{W}_{r, d_y +m})\le \epsilon$.
\end{theorem}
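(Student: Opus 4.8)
The plan is to deduce this from Theorem~\ref{Theorem_Shift-DeepONet} by observing that, when $d_y=1$, flexDeepONet is (a sub-architecture of) a Shift-DeepONet with the same trunk class. Recall that flexDeepONet differs from the vanilla DeepONet only in that a pre-net supplies the bias of the first hidden layer of the target network together with a rotation of the query input, and the branch net carries one extra coordinate $\tau_0$. For $d_y=1$ the relevant rotation group is trivial: the ``rotation'' is merely multiplication of the scalar $y$ by $+1$ or $-1$, and composing it with the (learnable) first-layer weight $W^1\in\mathbb{R}^{w\times 1}$ of the target network again yields an affine map $y\mapsto \tilde W^1 y+\tilde b^1$ of the scalar input, i.e.\ exactly the degree of freedom a Shift-DeepONet pre-net already controls. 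Hence $\mathcal{F}_{\text{flexDeepONet}}(\mathcal{P},\mathcal{B},\mathcal{T})$ is contained, up to the harmless extra output $\tau_0$ (which folds into the inner product as one more basis pair), in a class $\mathcal{F}_{\text{shift-DeepONet}}(\mathcal{P}',\mathcal{B}',\mathcal{T})$ with the same trunk class $\mathcal{T}$, with pre-net/branch classes still of bounded Sobolev norm, and with trunk class still Lipschitz. First I would make this embedding precise and check that the hypotheses of Theorem~\ref{Theorem_Shift-DeepONet} are inherited (possibly with adjusted constants).

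Given the embedding, $d(\mathcal{F}_{\text{flexDeepONet}}(\mathcal{P},\mathcal{B},\mathcal{T});\mathcal{W}_{r,d_y+m})\le\epsilon$ implies $d(\mathcal{F}_{\text{shift-DeepONet}}(\mathcal{P}',\mathcal{B}',\mathcal{T});\mathcal{W}_{r,d_y+m})\le\epsilon$, and Theorem~\ref{Theorem_Shift-DeepONet} then forces the number of parameters in $\mathcal{T}$ to be $\Omega(\epsilon^{-d_y/r})$, which is the claim. Alternatively, one can rerun the contradiction argument of Theorem~\ref{Theorem_Shift-DeepONet} directly: assuming $N_{\mathcal{T}_n}=o(\epsilon_n^{-d_y/r})$ along a sequence $\epsilon_n\to 0$ (so also $N_{\mathcal{T}_n}=o(\epsilon_n^{-(d_y+m)/r})$ and $p_n\le N_{\mathcal{T}_n}=o(\epsilon_n^{-d_y/r})$), one (i) replaces the pre-net $\rho_n$ by a neural network of size $O(w\,\epsilon_n^{-m/r})$ using universality of $\sigma$ and the bounded Sobolev norm of $\mathcal{P}$; (ii) needs no bilinear $\Phi$-layer of Lemma~\ref{lemma_hyper_first_layer} here, since for $d_y=1$ the sign/scale is absorbed straight into the first-layer weight; (iii) replaces the final inner product $\pi_{p_n}(\beta_n,\tau_n)$ by a neural network via Lemma~\ref{lemma_inner product} of size $O(p_n^{1+1/t}\epsilon_n^{-1/t})$ and approximates $\beta_n$ by a net of size $O(p_n\epsilon_n^{-m/r})$ using the bounded Sobolev norm of $\mathcal{B}$. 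Summing, and choosing $t$ large enough that $p_n^{1+1/t}\epsilon_n^{-1/t}=o(\epsilon_n^{-(d_y+m)/r})$, yields a single neural network with input $[\mathcal{E}(u),y]\in\mathbb{R}^{d_y+m}$ of total size $o(\epsilon_n^{-(d_y+m)/r})$ that approximates every $\psi\in\mathcal{W}_{r,d_y+m}$ within $O(\epsilon_n)$ --- contradicting Theorem~\ref{theorem_lower bound params_num}.

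The Lipschitz hypothesis on $\mathcal{T}$ plays the same role as in Theorem~\ref{Theorem_Shift-DeepONet}: approximating the pre-net introduces an error in the bias fed to the first hidden layer of the target network, and Lipschitz continuity keeps this error from blowing up through the remaining layers. The main obstacle I anticipate is the bookkeeping in step (i) / in making the embedding rigorous: one must argue that the (possibly $\mathcal{E}(u)$-dependent) sign coming from the $d_y=1$ rotation can be absorbed without either introducing a separate approximating sub-network that inflates the parameter count, or destroying the bounded-Sobolev-norm property of the pre-net class --- concretely, that after merging the rotation into $W^1$ the resulting pre-net still lies in a class to which the universal-approximation rate $O(\epsilon^{-m/r})$ applies. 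Once that point is settled, the triangle-inequality estimates, the parameter counting, and the choice of $t$ are routine and identical to the Shift-DeepONet case.
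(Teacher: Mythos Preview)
Your proposal is correct and follows essentially the same route as the paper. The paper's proof consists of only two sentences, together with the remark preceding the theorem that for $d_y=1$ the rotation is just multiplication by $\pm 1$ and is absorbed by the first-layer weight and bias; the only remaining novelty of flexDeepONet is the extra branch output $\tau_0$ feeding the output bias, which, as the paper notes, costs merely one additional weight per neuron when one rebuilds the single approximating network --- exactly your ``one more basis pair'' observation. Your alternative of rerunning the Shift-DeepONet contradiction argument directly is precisely what the paper means by ``all of the previous discussion are valid,'' and the concern you flag about the sign absorption is handled in the paper only implicitly, so your more careful bookkeeping is, if anything, an improvement in rigor over the paper's sketch.
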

\begin{proof}
    The main difference between flexDeepONet and Shift-DeepONet, which is not mentioned earlier, is that the branch net affects the bias of the output layer. However, adding the values of the two neurons can be implemented in a neural network by adding only one weight of value 1 for each neuron, so all of the previous discussion are valid.
\end{proof}

In fact, NOMAD can be substituted with the embedding method handled by \cite{galanti2020modularity}. Suppose that the branch net of NOMAD is continuously differentiable. Let's also assume that the Lipschitz constant of branch net and trunk net is bounded. We would like to briefly cite the relevant theorem here.
\begin{theorem}\label{Theorem_nomad} (\cite{galanti2020modularity})  Suppose that $\sigma$ is a universal activation in $C^1(\mathbb{R})$ such that $\sigma'$ is a bounded variation on $\mathbb{R}$. Additionally, suppose that there is no class of neural network that can represent any function in $\mathcal{W}_{1, d_y + m}$ other than a constant function. If the weight on the first layer of target network in  NOMAD is bounded with respect to $L^1$-norm, then $d(\mathcal{N};\mathcal{W}_{1, d_y + m})\le \epsilon$ implies the number of parameters in $\mathcal{N}$ is $\Omega(\epsilon^{-min(d_y+m, 2\cdot m_y)})$ where $\mathcal{N}$ denotes the class of function contained as a target network of NOMAD.\end{theorem}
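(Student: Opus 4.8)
The plan is to recognize the NOMAD reconstructor as an instance of the \emph{embedding method} analyzed by \citet{galanti2020modularity} and then to transfer their lower bound essentially verbatim. Recall that in NOMAD the target network receives the concatenation $[\beta(\mathcal{E}(u)),y]$, where $\beta(\mathcal{E}(u))\in\mathbb{R}^p$ plays the role of a learned embedding of the ``task'' $\mathcal{E}(u)\in[-1,1]^m$ and $y\in[-1,1]^{d_y}$ is the main (query) input. Thus every function in $\mathcal{N}$ has the form $(\mathcal{E}(u),y)\mapsto g\big([\beta(\mathcal{E}(u)),y];\Theta\big)$ with $g$ the target network. First I would fix this identification and check that all hypotheses of the embedding-method theorem hold once we set the main-input dimension to $m_y$ (here the query dimension $d_y$) and the context dimension to $m$: $\sigma\in C^1$ is universal with $\sigma'$ of bounded variation, no neural-network class represents a non-constant element of $\mathcal{W}_{1,d_y+m}$, and the $L^1$-norm of the first-layer weights of $g$ acting on $y$ is bounded.

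The lower bound then splits into two regimes, whose minimum gives the two arguments of the stated exponent. For the $d_y+m$ regime I would argue as in the proof of Theorem~\ref{theorem_main}: since $\beta$ is itself a network, the composition $F(\mathcal{E}(u),y):=g([\beta(\mathcal{E}(u)),y];\Theta)$ is a single fully connected network on $\mathbb{R}^{d_y+m}$, so $d(\mathcal{N};\mathcal{W}_{1,d_y+m})\le\epsilon$ forces $F$ to approximate generic non-constant $\psi\in\mathcal{W}_{1,d_y+m}$, and Theorem~\ref{theorem_lower bound params_num} returns $\Omega(\epsilon^{-(d_y+m)})$ parameters. In the regime where $m$ is small relative to $d_y$ this cost cannot be absorbed by the narrow embedding bottleneck and must be carried by $g$ itself, producing the $\epsilon^{-(d_y+m)}$ branch.

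The $2m_y$ regime is the substantive one and is where the bounded-first-layer-weight hypothesis is indispensable. Here the key point is that $g$ sees the input function only through the low-dimensional embedding $\beta(\mathcal{E}(u))$, so as $u$ varies the target network must realize a whole family of functions of $y$ steered by $\beta$. Following \citet{galanti2020modularity}, I would show that the $L^1$ bound on the weights connecting $y$ to the first hidden layer controls the Lipschitz dependence of $g$ on $y$, so that reproducing the joint high-frequency oscillation in $y$ of a generic Sobolev function \emph{while} modulating it through the embedding costs $\Omega(\epsilon^{-2m_y})$ parameters; intuitively one ``$m_y$-worth'' of complexity is needed to oscillate in $y$ and a second to let the embedding steer those oscillations. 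Taking whichever design is cheaper gives the number of parameters in $\mathcal{N}$ as $\Omega\big(\epsilon^{-\min(d_y+m,\,2m_y)}\big)$, matched by the corresponding construction.

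The main obstacle is exactly this second regime. Unlike the DeepONet analysis, the inner-product replacement Lemma~\ref{lemma_inner product} is unavailable, because NOMAD's reconstructor is fully nonlinear and the embedding cannot be decoupled from the query input by a cheap network; this is precisely why the bound degrades from the hypernetwork-optimal $\epsilon^{-d_y/r}$. The bounded-weight assumption on the first layer is what stops the target network from hiding arbitrarily fast variation in $y$ behind large weights, and turning that $L^1$ bound into a per-neuron limit on achievable oscillation and then into a parameter count is the technical heart of the argument, carried out in \citet{galanti2020modularity} and inherited here through the reduction above.
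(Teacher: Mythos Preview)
Your proposal is correct and matches the paper's treatment: the paper does not prove this theorem at all but simply observes that NOMAD ``can be substituted with the embedding method handled by \cite{galanti2020modularity}'' and then cites their result verbatim. Your reduction---identify the NOMAD target network $g([\beta(\mathcal{E}(u)),y])$ with the embedding architecture of \citet{galanti2020modularity} and inherit their lower bound---is exactly this, and your additional sketch of the two regimes goes beyond what the paper itself provides.
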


\section{Chunked embedding method}\label{appendix:chunked}

The HyperDeepONet may suffer from the large complexity of the hypernetwork when the size of the target network increases. Although even a small target network can learn various operators with proper performance, a larger target network will be required for more accurate training. To take into account this case, we employ a chunk embedding method which is developed by \citet{Oswald2020Continual}. The original hypernetwork was designed to generate all of the target network's weights so that the complexity of hypernetwork could be larger than the complexity of the target network. Such a problem can be overcome by using a hypernetwork with smaller outputs.

\begin{figure}[ht]
\begin{center}
\includegraphics[width=0.8\textwidth]{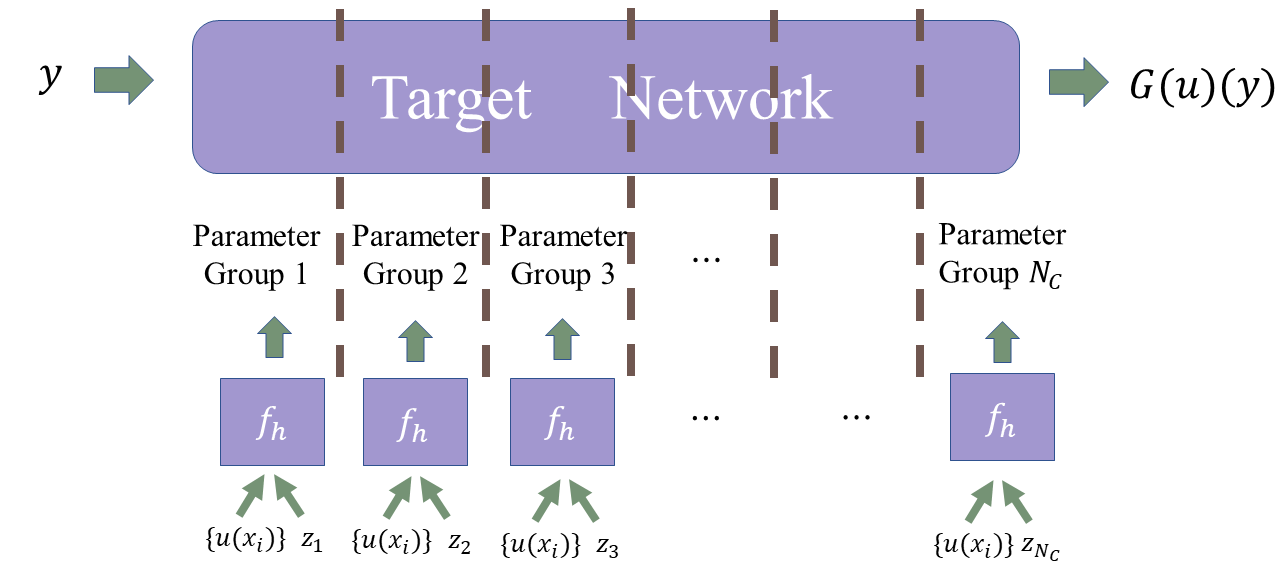}
\end{center}
\caption{Chunk Embedding Method}
\label{fig:chunk}
\end{figure}

More specifically, Figure \ref{fig:chunk} describes how the chunk embedding method reduces the number of learnable parameters. First, they partition the weights and biases of the target network. The hypernetwork then creates the weights of each parameter group by using the sensor values $\left\{u(x_i)\right\}_{i=1}^m$ with a latent vector $z_j$. All groups share the hypernetwork so that the complexity decreases by a factor of the number of groups. Since the latent vectors $\{z_j\}_{j=1}^{N_c}$ learn the characteristics of each group during the training period, the chunked embedding method preserves the expressivity of the hypernetwork. The chunked architecture is a universal approximator for the set of continuous functions with the existence of proper partitions (Proposition 1 in \citet{Oswald2020Continual}). We remark that the method can also generate the additional weights and discard the unnecessary ones when the number of the target network's parameters is not multiple of $N_C$, which is the number of group.

\section{Experimental details}\label{appendix:details}
\begin{table}[htbp]
\centering
\resizebox{0.83\columnwidth}{!}{
\begin{tabular}{ p{2.5cm} p{2cm} p{2.2cm} p{2cm} p{2cm} p{2cm}}
  \toprule
          & \textbf{DeepONet}  &\textbf{Shift} &\textbf{Flex}                  & \textbf{NOMAD}    &\textbf{Hyper(ours)}   \\ 
  \midrule
     Identity & 0.0005 & 0.0002 & 0.0005& 0.0001 &  0.0001 \\
    Differentiation& 0.0005& 0.0002 & 0.0005 &  0.0001 &  0.0001                      \\
    Advection   & 0.0005& 0.0001& 0.0001 & 0.0002 & 0.0005\\
    Burgers    & 0.0001 & 0.0005 & 0.0002 & 0.0001& 0.0001\\
    Shallow    & 0.0001 & 0.0005 & - & 0.0001 & 0.0005\\
  \bottomrule
\end{tabular}
}
\caption{Setting of the decay rate for each operator problem}
\label{table:Gamma_Setting}
\end{table}

In most experiments, we follow the hyperparameter setting in \citet{lu2019deeponet,lu2021learning,lu2022comprehensive}. We use ADAM in \citet{DBLP:journals/corr/KingmaB14} as an optimizer with a learning rate of $1e-3$ and zero weight decay. In all experiments, an InverseTimeDecay scheduler was used, and the step size was fixed to 1. In the experiments of identity and differentiation operators, grid search was performed using the sets {0.0001, 0.0002, 0.0005, 0.001, 0.002, 0.005} for decay rates. The selected values of the decay rate for each model can be found in Table \ref{table:Gamma_Setting}.

\subsection{Identity}
As in the text, we developed an experiment to learn the identity operator for the 20th-order Chebyshev polynomials (Figure \ref{fig:sample_identity}). Note that the absolute value of the coefficients of all orders is less than or equal to $1/4$. We discretize the domain $[-1, 1]$ with a spatial resolution of 50. For the experiments described in the text, we construct all of the neural networks with \textit{tanh}. We use 1,000 training pairs and 200 pairs to validate our experiments. The batch size during the training is determined to be 5,000, which is one-tenth the size of the entire dataset.

\subsection{Differentiation}
In this experiment, we set functions whose derivatives are 20th-order Chebyshev polynomials as input functions. As mentioned above, all coefficients of the Chebyshev polynomial are between $-1/4$ and $1/4$. We use the 100 uniform grid on the domain [-1, 1]. The number of training and test samples is 1,000 and 200, respectively. We use a batch size of 10,000, which is one-tenth the size($100,000=100\cdot1,000$) of the entire dataset. \begin{figure}[]
\begin{center}
\includegraphics[width=\textwidth]{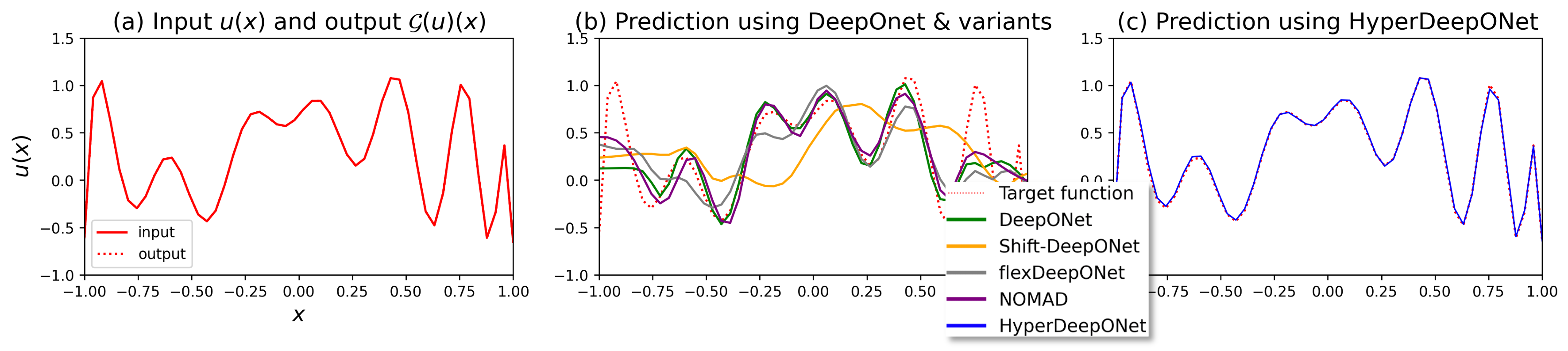}
\end{center}
\caption{One test data example of differentiation operator problem.}
\label{fig:sample_identity}
\end{figure}

\subsection{Advection equation}
We consider the linear advection equation on the torus $\mathbb{T}:=\mathbb{R}/\mathbb{Z}$ as follows: 
\begin{equation}\label{advection}
  \begin{cases}
    \frac{\partial w}{\partial t} + c\cdot\frac{\partial w}{\partial x}=0.\\
    w(x,0) = w_0(x), & x \in \mathbb{T},
  \end{cases} 
\end{equation}
where $c$ is a constant which denotes the propagation speed of $w$. By constructing of the domain as $\mathbb{T}$, we implicitly assume the periodic boundary condition. In this paper, we consider the case when $c=1$. Our goal is to learn the operator which maps $w_0(x)(=w(0, x))$ to $w(0.5, x)$. We use the same data as in \citet{lu2022comprehensive}. We discretize the domain $[0, 1]$ with a spatial resolution of 40. The number of training samples and test samples is 1,000 and 200, respectively. We use the full batch for training so that the batch size is $40\cdot1,000=40,000$.

\subsection{Burgers' equation}
We consider the 1D Burgers' equation which describes the movement of the viscous fluid
\begin{equation}\label{burgers}
  \begin{cases}
    \frac{\partial w}{\partial t} = -u \cdot \frac{\partial w}{\partial x} + \nu \frac{\partial^2 w}{\partial x^2}, & (x, t) \in (0, 1)\times(0, 1],\\
    w(x,0) = w_0(x), & x \in (0, 1),
  \end{cases} 
\end{equation}
where $w_0$ is the initial state and $\nu$ is the viscosity. Our goal is to learn the nonlinear \textbf{solution operator of the Burgers' equation}, which is a mapping from the initial state $w_0(x)(=w(x, 0))$ to the solution $w(x, 1)$ at $t=1$. We use the same data of Burgers' equation provided in \citet{li2021fourier}. The initial state $w_0(x)$ is generated from the Gaussian random field $\mathcal{N}(0, 5^4(-\Delta + 25 I)^{-2})$ with the periodic boundary conditions. The split step and the fine forward Euler methods were employed to generate a solution at $t=1$. We set viscosity $\nu$ and a spatial resolution to 0.1 and $2^{7}=128$, respectively. The size of the training sample and test sample we used are 1,000 and 200, respectively.

We take the ReLU activation function and the InverseTimeDecay scheduler to experiment with the same setting as in \citet{lu2022comprehensive}. For a fair comparison, all experiments on DeepONet retained the hyperparameter values used in \citet{lu2022comprehensive}. We use the full batch so that the batch size is 1,280,000.

\subsection{Shallow Water equation}

The shallow water equations are hyperbolic PDEs which describe the free-surface fluid flow problems. They are derived from the compressible Navier-Stokes equations. The physical conservation laws of the mass and the momentum holds in the shock of the solution. The specific form of the equation can be written as  
\begin{equation}\label{shallow}
  \begin{cases}
    \frac{\partial h}{\partial t} + \frac{\partial }{\partial x}(hu)+\frac{\partial }{\partial y}(hv)=0,
    \\\frac{\partial (hu)}{\partial t}+\frac{\partial }{\partial x}(u^2 h + \frac{1}{2}gh^2)+\frac{\partial }{\partial y} (huv)=0,
    \\\frac{\partial (hv)}{\partial t}+\frac{\partial }{\partial y}(v^2 h + \frac{1}{2}gh^2)+\frac{\partial }{\partial x} (huv)=0,
    \\h(0, x_1, x_2) = h_0(x_1, x_2),
  \end{cases} 
\end{equation}
for $t\in[0,1]$ and $x_1, x_2 \in [-2.5, 2.5]$ where $h(t,x_1,x_2)$ denotes the height of water with horizontal and vertical velocity $(u, v)$. $g$ denotes the gravitational acceleration. In this paper, we aim to learn the operator $h_{0}(x_1, x_2) \mapsto  \left\{h(t, x_1, x_2)\right\}_{t\in[1/4,1]}$ without the information of $(u, v)$. For the sampling of initial conditions and the corresponding solutions, we directly followed the setting of \citet{takamoto2022pdebench}. The 2D radial dam break scenario is considered so that the initialization of the water height is generated as a circular bump in the center of the domain. The initial condition is generated by
\begin{equation}
h(t=0,x_1,x_2)=    \begin{cases}
    2.0, & \text{for $r<\sqrt{x_1^2+x_2^2}$}\\
    1.0, & \text{for $r\geq\sqrt{x_1^2+x_2^2}$}
    \end{cases}
\end{equation}
with the radius $r$ randomly sampled from $U[0.3,0.7]$. The spatial domain is determined to be a 2-dimensional rectangle $[-2.5, 2.5]\times[-2.5, 2.5]$. We use $256=16^{2}$ grids for the spatial domain. We train the models with three snapshots at $t=0.25, 0.5, 0.75$, and predict the solution $h(t,x_1,x_2)$ for four snapshots $t=0.25,0.5,0.75,1$ on the same grid. We use 100 training samples and the batch size is determined to be 25,600.

\section{Additional experiments}\label{appendix:additional}
\subsection{Comparison under various conditions}
The experimental results under various conditions are included in Table \ref{table:Identity Operator_various} by modifying the network structure, activation function, and number of sensor points. Although the DeepONet shows good performance in certain settings, the proposed HyperDeepONet shows good performance without dependency on the various conditions.



\begin{table}[ht] 
\begin{tabular}{ccccccc}
\hline
\multicolumn{2}{c}{Activation: ReLU, $M=30$}                                       & DeepONet & Shift & Flex & NOMAD & Hyper(Ours) \\ \hline
\multirow{2}{*}{Target network} & $d_y$-30-30-30-1 & 0.16797             & 1.30852    & 1.04292     & 0.27209 & 0.02059          \\
                                        & $d_y$-50-50-1    & 0.04822             & 1.08760    & 1.11957     & 0.21391          & 0.05562\\ \hline
\end{tabular}

\begin{tabular}{ccccccc}
\hline
\multicolumn{2}{c}{Activation: ReLU, $M=100$}                                       & DeepONet & Shift & Flex & NOMAD & Hyper(Ours) \\ \hline
\multirow{2}{*}{Target network} & $d_y$-30-30-30-1 & 0.02234             & 1.08310    & 1.03741     & 0.19089  & 0.01743         \\
                                        & $d_y$-50-50-1    & 0.07255             & 1.47373    & 1.13217     & 0.14020  & 0.04645        \\ \hline
\end{tabular}

\begin{tabular}{ccccccc}
\hline
\multicolumn{2}{c}{Activation: PReLU, $M=30$}                                       & DeepONet & Shift & Flex & NOMAD & Hyper(Ours) \\ \hline
\multirow{2}{*}{Target network} & $d_y$-30-30-30-1 & 0.11354             & 1.09395    & 1.03502     & 0.25651 & 0.02844          \\
                                        & $d_y$-50-50-1    & 0.00873             & 1.14073    & 1.06947     & 0.04054  & 0.04302        \\ \hline
\end{tabular}

\begin{tabular}{ccccccc}
\hline
\multicolumn{2}{c}{Activation: PReLU, $M=100$}                                       & DeepONet & Shift & Flex & NOMAD & Hyper(Ours) \\ \hline
\multirow{2}{*}{Target network} & $d_y$-30-30-30-1 & 0.01035             & 1.05080    & 1.07791     & 0.16592   & 0.01083        \\
                                        & $d_y$-50-50-1    & 0.07255             & 1.47373    & 1.13217     & 0.14020 & 0.04645         \\ \hline
\end{tabular}\caption{The relative $L^2$ test errors for experiments on training the identity operator under various conditions}
\label{table:Identity Operator_various}
\end{table}

\subsection{Varying the number of layers in branch net and hypernetwork}
Figure \ref{fig:same_target_diff_branch_layer_width} compares the relative $L^2$ error of the training data and test data for the DeepONet and the HyperDeepONet by varying the number of layers in the branch net and the hypernetwork while maintaining the same small target network. Note that the bottom 150 training and test data with lower errors are selected to observe trends cleary. The training and test error for the DeepONet is not reduced despite the depth of the branch net becoming larger. This is a limitation of DeepONet's linear approximation. DeepONet approximates the operator with the dot product of the trunk net's output that approximates the basis of the target function and the branch net's output that approximates the target function's coefficient. Even if a more accurate coefficient is predicted by increasing the depth of the branch net, the error does not decrease because there is a limit to approximating the operator with a linear approximation using the already fixed trunk net.

The HyperDeepONet approximates the operator with a low test error in all cases with a different number of layers. Figure \ref{fig:same_target_diff_branch_layer_width} shows that the training error of the HyperDeepONet remains small as the depth of the hypernetwork increases, while the test error increases. The increasing gap between the training and test errors is because of overfitting. HyperDeepONet overfits the training data because the learnable parameters of the model are more than necessary to approximate the target operator.
\begin{figure}[]
\includegraphics[width=0.5\textwidth]{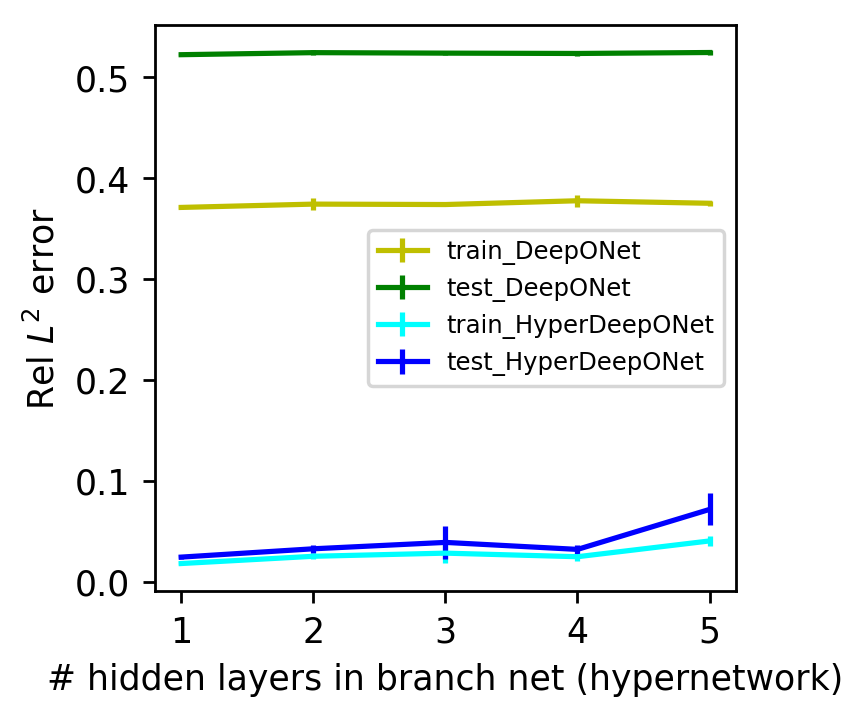}
\includegraphics[width=0.5\textwidth]{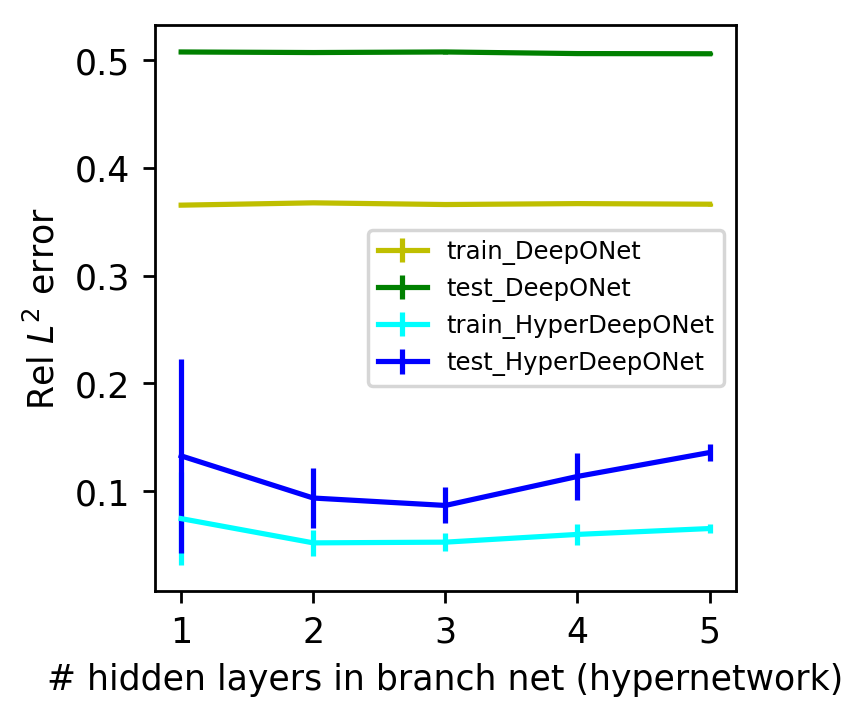}
\caption{Varying the number of layers of branch net and hypernetwork in DeepONet and HyperDeepONet for identity operator problem (left) and differentiation operator problem (right).}
\label{fig:same_target_diff_branch_layer_width}
\end{figure}

\subsection{Comparison of HyperDeepONet with Fourier Neural Operator}
The Fourier Neural Operator (FNO) \citep{li2021fourier} is a well-known method for operator learning. \citet{lu2022comprehensive} consider 16 different tasks to explain the relative performance of the DeepONet and the FNO. They show that each method has its advantages and limitations. In particular, DeepONet has a great advantage over FNO when the input function domain is complicated, or the position of the sensor points is not uniform. Moreover, the DeepONet and the HyperDeepONet enable the inference of the solution of time-dependent PDE even in a finer time grid than a time grid used for training, e.g.the continuous-in-time solution operator of the shallow water equation in our experiment. Since the FNO is image-to-image based operator learning model, it cannot obtain a continuous solution operator over time $t$ and position $x_1, x_2$. In this paper, while retaining these advantages of DeepONets, we focused on overcoming the difficulties of DeepONets learning complex target functions because of linear approximation. Therefore, we mainly compared the vanilla DeepONet and its variants models to learn the complex target function without the result of the FNO.

\begin{table}[]
\begin{center}
\begin{tabular}{cccccc}
\hline
\multirow{2}{*}{Model} & \multirow{2}{*}{HyperDeepONet (ours)}                            & \multicolumn{4}{c}{Fourier Neural Operator}           \\
                       &                                                         & Mode 2  & Mode 4  & Mode 8  & Mode 16 \\ \hline
Identity               & 0.0358                                                  & 0.0005 & 0.0004 & 0.0003 & 0.0004 \\
Differentiation        & 0.1268                                                  & 0.8256 & 0.6084 & 0.3437 & 0.0118 \\ \hline
$\#$Param                 & \begin{tabular}[c]{@{}c@{}}15741(or 16741)\end{tabular} & 20993  & 29185  & 45569  & 78337  \\ \hline
\end{tabular}
\end{center}
\caption{The relative $L^2$ test errors and the number of parameters for the identity and differentiation operator problems using HyperDeepONet and FNO with different number of modes. $\#$Param denote the number of learnable parameters.}
\label{table:fno}
\end{table}

Table \ref{table:fno} shows the simple comparison of the HyperDeepONet with the FNO for the identity operator and differentiation operator problems. Although the FNO structure has four Fourier layers, we use only one Fourier layer with 2,4,8, and 16 modes for fair comparison using similar number of parameters. The FNO shows a better performance than the HyperDeepONet for the identity operator problem. Because the FNO has a linear transform structure with a Fourier layer, the identity operator is easily approximated even with the 2 modes. In contrast, the differentiation operator is hard to approximate using the FNO with 2, 4, and 8 modes. Although the FNO with mode 16 can approximate the differentiation operator with better performance than the HyperDeepONet, it requires approximately 4.7 times as many parameters as the HyperDeepONet.

\begin{table}[ht] 
\begin{center}
\begin{tabular}{ccccccc}
\hline
\multicolumn{2}{c}{Model}              & DeepONet & Shift & Flex & NOMAD & Hyper(Ours) \\ \hline
\multirow{2}{*}{Advection} & $\#$Param & 274K     & 281K  & 282K  & 270K  & 268K          \\
                           & Rel error     & 0.0046       & 0.0095    & 0.0391   & 0.0083    & 0.0048          \\ \hline
\multirow{2}{*}{Burgers}   & $\#$Param & 115K      & 122K     & 122K    & 117K     & 114K           \\
                           & Rel error     & 0.0391        & 0.1570     & 0.1277    & 0.0160     & 0.0196          \\ \hline
\multirow{2}{*}{Shallow}   & $\#$Param &    107K      &  111K     &  -    &   117K    &    101K         \\
                           & Rel error     &   0.0279       &   0.0299    &  -    &  0.0167     &  0.0148           \\ \hline
\multirow{2}{*}{\begin{tabular}[c]{@{}c@{}}Shallow\\ w/ small param\end{tabular}}   & $\#$Param &    6.5K      &  8.5K     &  -    &  6.4K    &    5.6K         \\
                           & Rel error     &   0.0391       &   0.0380    &  -    &  0.0216     &  0.0209           \\ \hline
\end{tabular}
\end{center}
\caption{The relative $L^2$ test errors and the number of parameters for the solution operators of PDEs experiments. $\#$Param denote the number of learnable parameters. Note that the all five models use the similar number of parameters for each problem.}
\label{table:compare_baseline}
\end{table}
\subsection{Performance of other baselines with the same number of learnable parameters}

For three different PDEs with complicated target functions, we compare all the baseline methods in Table \ref{table:compare_baseline} to evaluate the performances. We analyze the model's computation efficiency based on the number of parameters and fix the model's complexity for each equation. All five models demonstrated their prediction abilities for the advection equation. DeepONet shows the greatest performance in this case, and other variants can no longer improve the performance. For the Burgers' equation, NOMAD and HyperDeepONet are the two outstanding algorithms from the perspective of relative test error. NOMAD seems slightly dominant to our architectures, but the two models compete within the margin of error. Furthermore, HyperDeepONet improves its accuracy using the chunk embedding method, which enlarge the target network's size while maintaining the complexity. Finally, HyperDeepONet and NOMAD outperform the other models for 2-dimensional shallow water equations. The HyperDeepONet still succeeds in accurate prediction even with a few parameters. It can be observed from Table \ref{table:compare_baseline} that NOMAD is slightly more sensitive to an extreme case using a low-complexity model. Because of the limitation in computing 3-dimensional rotation, FlexDeepONet cannot be applied to this problem.

Figure \ref{fig:shallow(Shape)} shows the results on prediction of shallow water equations' solution operator using the DeepONet and the HyperDeepONet. The overall performance of the DeepONet is inferior to that of the HyperDeepONet, which is consistent with the result in Figure \ref{fig:shallow(loss)}. In particular, the DeepONet has difficulty matching the overall circular shape of the solution when the number of parameters is small. This demonstrates the advantages of the HyperDeepONet when the computational resource is limited.

\begin{figure}[]
\begin{center}
\includegraphics[width=\textwidth]{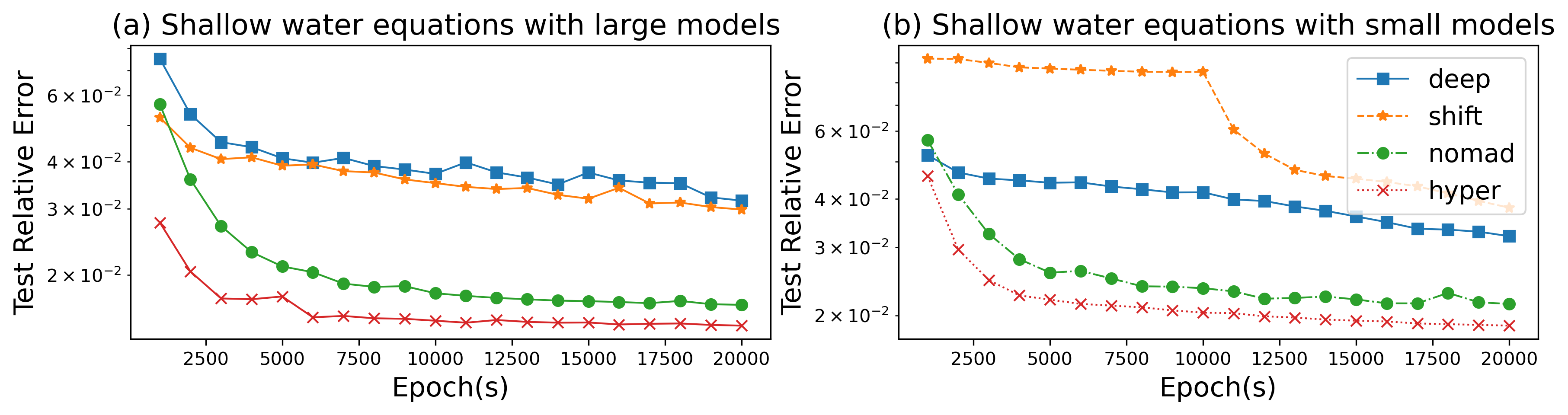}
\end{center}
\caption{The test $L^2$ relative errors of four methods during training for the solution operator of shallow water equations.}
\label{fig:shallow(loss)}
\end{figure}


\begin{table}[]
\begin{center}
\begin{tabular}{cccc}
\hline
\multicolumn{2}{c}{}                                   & \begin{tabular}[c]{@{}c@{}}Training time (s)\\ (per 1 epoch)\end{tabular} & Inference time (ms) \\ \hline
\multirow{2}{*}{Same target (Differentiation)} & DeepONet      &  1.018                                                                  & 0.883          \\
                                               & HyperDeepONet &       1.097                                                             & 1.389          \\ \hline
\multirow{2}{*}{Same $\#$param (Advection)}    & DeepONet      &   0.466                                                                 & 0.921          \\
                                               & HyperDeepONet &    0.500                                                                & 1.912          \\ \hline
\end{tabular}
\end{center}
\caption{The training time and inference time for the differentiation operator problem and the solution operator of advection equation problem using DeepONet and HyperDeepONet.}
\label{table:time}
\end{table}

\subsection{Comparison of training time and inference time}

Table \ref{table:time} shows the training time and the inference time for the DeepONet and the HyperDeepONet for two different operator problems. When the same small target network is employed for the DeepONet and the HyperDeepONet, the training time and inference time for the HyperDeepONet are larger than for the DeepONet. However, in this case, the time is meaningless because DeepONet does not learn the operator with the desired accuracy at all (Table \ref{table:same_target_error} and Figure \ref{fig:sample_diff}).

Even when both models use the same number of training parameters, HyperDeepONet takes slightly longer to train for one epoch than the DeepONet. However, the training complex operator using the HyperDeepONet takes fewer epochs to get the desired accuracy than DeepONet, as seen in Figure \ref{fig:sample_solution_operator}. This phenomenon can also be observed for the shallow water problem in Figure \ref{fig:shallow(loss)}. It shows that the HyperDeepONet converges to the desired accuracy faster than any other variants of DeepONet. 

The HyperDeepONet also requires a larger inference time because it can infer the target network after the hypernetwork is used to generate the target network's parameters. However, when the input function's sensor values are already fixed, the inference time to predict the output of the target function for various query points is faster than that of the DeepONet. This is because the size of the target network for HyperDeepONet is smaller than that of the DeepONet, although the total number of parameters is the same.

\begin{figure}[ht]
\begin{center}
\includegraphics[width=\textwidth]{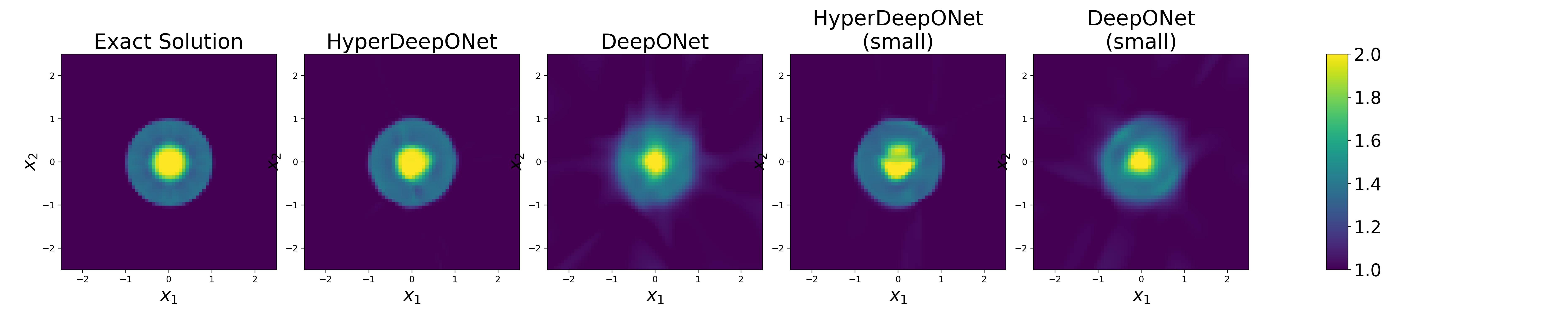}

\includegraphics[width=\textwidth]{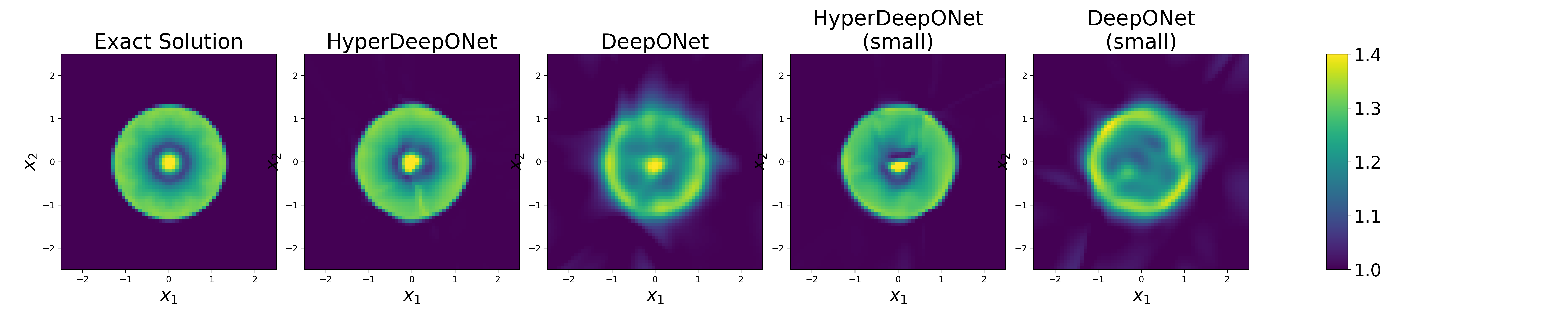}
\

\includegraphics[width=\textwidth]{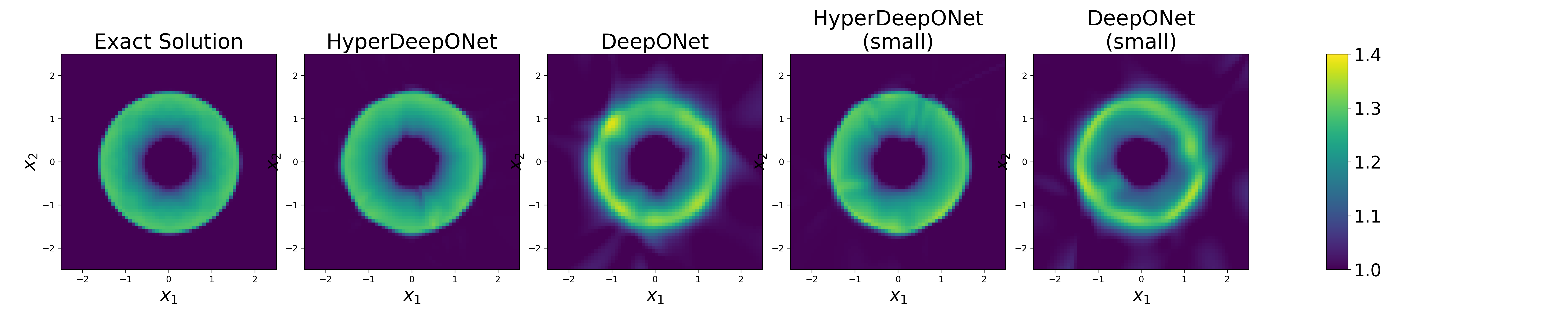}

\includegraphics[width=\textwidth]{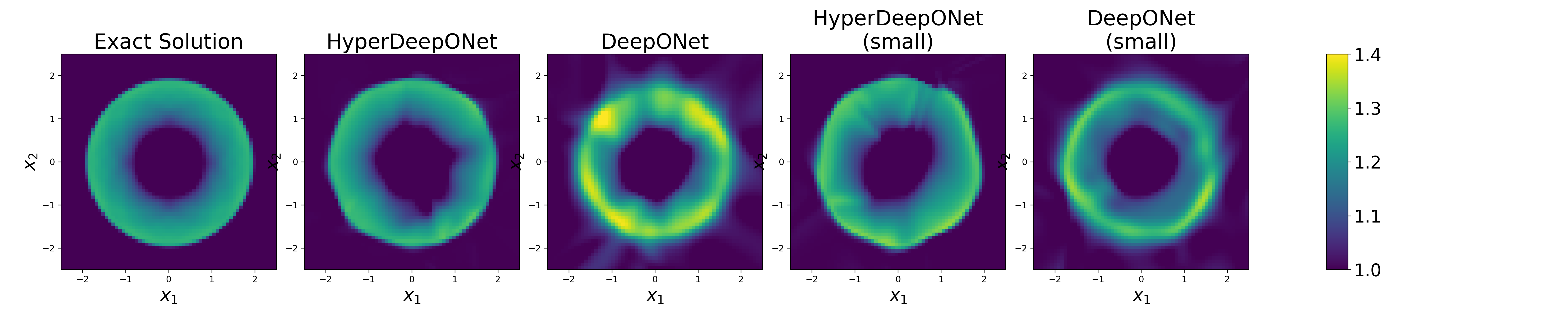}
\end{center}
\caption{The examples of predictions on the solution operator of shallow water equations using the DeepONet and the HyperDeepONet. The first column represents the exact solution generated in \cite{takamoto2022pdebench}, and the other four columns denote the predicted solutions using the corresponding methods. The four rows shows the predictions $h(t,x_1,x_2)$ at four snapshots $t=[0.25, 0.5, 0.75, 1]$.}
\label{fig:shallow(Shape)}
\end{figure}

\end{document}